\newtheorem{theorem}{Theorem}
\newtheorem{lemma}{Lemma}
\newtheorem{mydef}{Definition}
\newtheorem{proof}{Proof}
\title{Multi-Scale Architectures Matter: On the Adversarial Robustness of Flow-based Lossless Compression}
\author{
    Yi-chong Xia, Bin Chen, Yan Feng, Tian-shuo Ge 
}
\begin{document}
\maketitle
\begin{abstract}

Flow-based models are attractive in lossless image compression due to their exact likelihood estimation on entropy coding. In practice, the lossless compression ratio affects the storage cost of data with high fidelity. However, the robustness and efficiency trade-off of flow-based deep lossless compression models has not been fully explored. In this paper, we characterize the trade-off of flow-based deep lossless compression theoretically and empirically. Specifically, we show that flow-based models are susceptible to adversarial examples in terms of dramatic change in compression ratio. In theory, we demonstrate that the fragile robustness of the flow-based model is caused by its intrinsic multi-scale architectures that lack the desirable Lipschitz property. Based on the theoretical insight, we develop a stronger attack, i.e., Auto-Weighted Projected Gradient Descent (AW-PGD), on flow-based lossless compression algorithms, which is more effective and can generate more universal adversarial examples. Furthermore, we propose a novel flow-based lossless compression model, Robust Integer Discrete Flow (R-IDF), which can achieve comparable robustness as adversarial training without loss of compression efficiency. Our experiments show that the PGD algorithm will fall into local extreme values when attacking the compression model. Our attack method can effectively escape the saddle point and attack the compression task more effectively. Moreover, our defense method can greatly improve the invulnerability of the flow-based compression model with almost no sacrificing of clean model performance.

\end{abstract}
\section{Introduction}



As a probabilistic modeling technique, the flow-based model has demonstrated remarkable potential in the field of lossless compression \cite{idf,idf++,lbb,ivpf,iflow},. Compared with other deep generative models (eg. Autoregressive, VAEs) \cite{bitswap,hilloc,pixelcnn++,pixelsnail} that explicitly model the data distribution probabilities, flow-based models perform better due to their excellent probability density estimation and satisfactory inference speed. In flow-based models, multi-scale architecture provides a shortcut from the shallow layer to the output layer, which significantly reduces the computational complexity and avoid performance degradation when adding more layers. This is essential for constructing an advanced flow-based learnable bijective mapping. Furthermore, the lightweight requirement of the model design in practical compression tasks suggests that flows with multi-scale architecture achieve the best trade-off between coding complexity and compression efficiency. 

\begin{figure}[tb]
\begin{center}
\includegraphics[width=0.95\columnwidth]{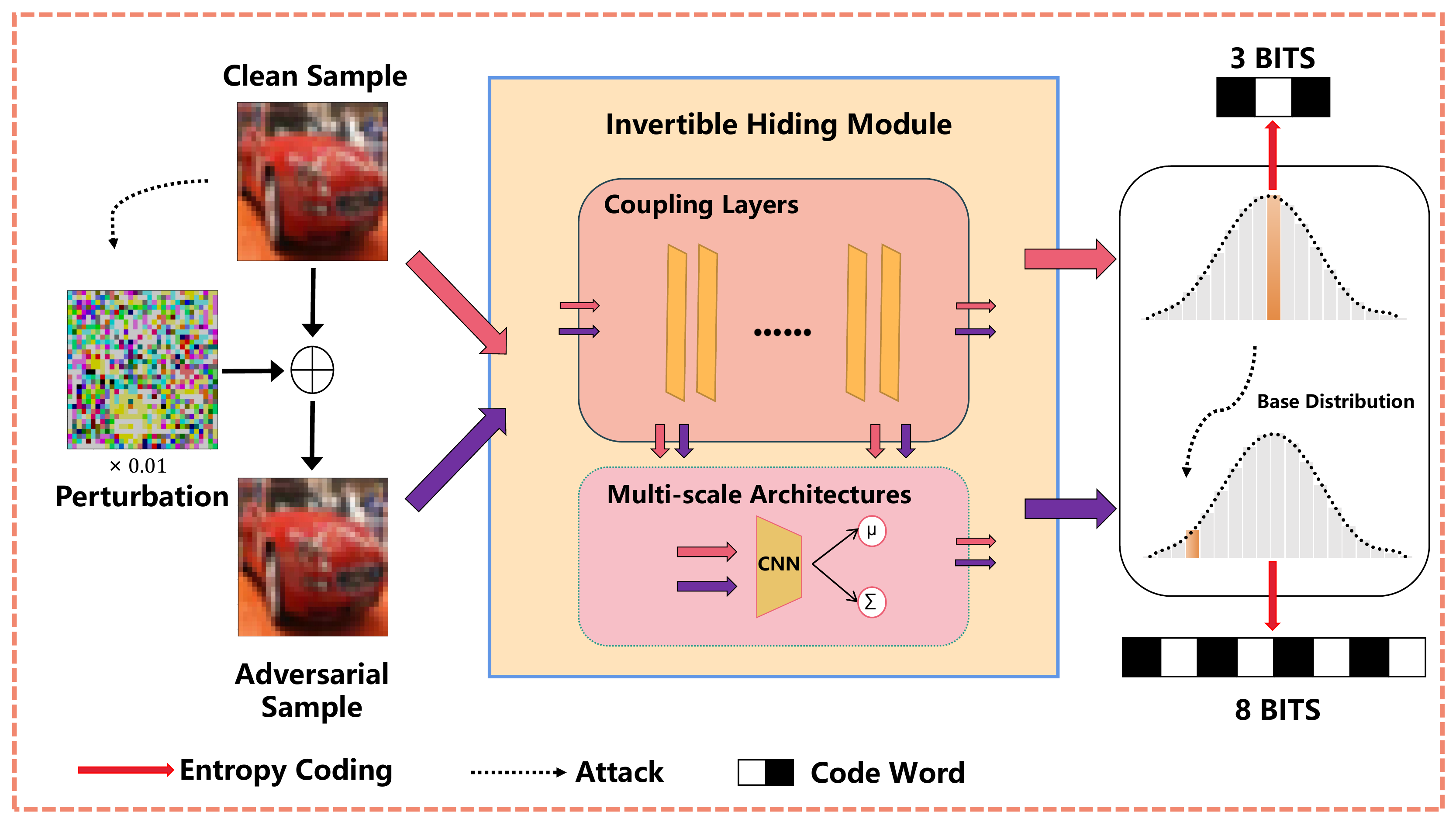}
\end{center}
  \caption{An example of an adversarial attack on a flow-based compression model. Although this adversarial noise is almost invisible, it successfully causes a dramatic shift in the likelihood of the flow model output and reduces the compression ratio from 2.5 to 1 (uncompressed).}
\label{intro}
\end{figure}

Despite recent works \cite{ivpf,idf,idf++} mainly focus on designing more expressive architectures within the flow-based lossless compression models, the robustness of the inner architectures is ignored.
This poses a potential security threat in cloud and streaming platforms with lossless compression codecs consisting of 
unreliable architectures. For example, the cloud and streaming platforms can be vulnerable to flow attack when the volume of transmitted or stored data grows rapidly controlled by the adversary as shown in Fig. 1. The clean sample is assigned with a short code length 3 bits estimated with high probability from a known probability distribution by the flow-based compression model. But the adversary can add a slight perturbation to induce the perturbed image being wrongly estimated with low probability, i.e. with longer code length 5 bits. This makes it problematic to deploy flow-based lossless compression codes in practice. Accordingly, it is desirable to study
the architectural robustness of flow-based lossless compression models and address such security concerns. 
While prior work \cite{adflow} has shown that the flow model might become vulnerable under adversarial attack, there appears to be a restricted approach since their theoretical analysis is only based on the continuous flow models under simplified assumptions. By contrast, a deeper understanding of the architectural vulnerability of flow-based lossless compression models is still missing. Therefore, we raise the following questions:
\emph{Whether or not the inner architecture itself of a flow-based lossless compression model can expose more vulnerable to adversarial attacks? How can we design a robust lossless compression codec from an architectural robustness perspective?}

In this paper, we make the ﬁrst attempt to identify the weakness of the multi-scale architecture adopted by many state-of-the-art flow-based lossless compression models, from an empirical and theoretical perspective. Specifically, We analyze the robustness of the multi-scale architecture via the Lipschitz property, which reveals the reason behind the fragile adversarial robustness. Based on the theoretical insight, we design a more powerful attack, i.e., auto-weighted projected gradient descent (AW-PGD) attack, to expose the more vulnerable endopathic controlled by the multi-Scale architecture. Although the 
multi-scale architecture is more vulnerable to adversarial attacks, we surprisingly find that the robustness of the rest of the model can be enhanced by such weakness. To achieve this we propose a novel lossless compression model named 
robust integer discrete flow (R-IDF). A method for constraining multiscale-architecture.
Experiments show that R-IDF  can significantly enhance the robustness of the model. For instance, the compression ratio is improved from 1.04 to 1.40 under PGD-10 


Our main contributions can be summarized as follows:
\begin{itemize}
\item We present a more reasonable assumption for the flow model. Based on this, we prove an upper bound on the increase of negative log-likelihood under adversarial perturbation. Our theoretical results meets experimental expectations, and expose the architectural vulnerability of flow-based lossless compression models to the adversarial attack.
\item Based on our theoretical analysis on the architectural robustness, we designed AW-PGD, which can effectively help the PGD attack to get out of the saddle point dilemma when generating adversarial samples. Our experiments show that AW-PGD is more effective and universal adversarial examples. 
\item We propose R-IDF, a regularization method for IDF, which achieves both reliable robustness and computational efficiency comparable to adversarial training. Compared with the clean model and the adversarial training model, its robustness is improved by 50\% and 5\%, respectively
\end{itemize}
\begin{figure*}[ht]
\begin{center}
\includegraphics[width=0.95\textwidth]{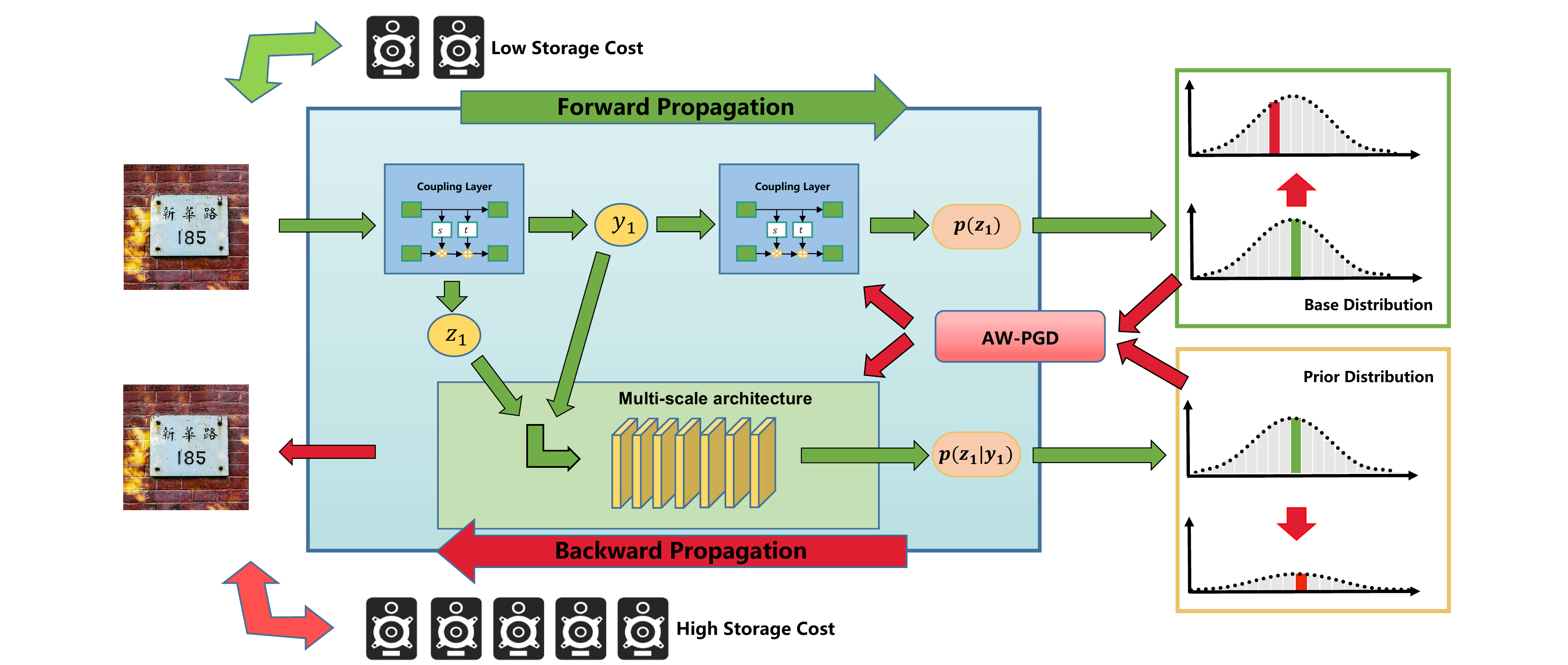}
\end{center}
  \caption{The full-scale pipeline of the proposed AW-PGD method against a flow-based model. Through Multiscale-architecture, the input image outputs multiple prior likelihoods and the likelihood of the base distribution. AW-PGD gives different weights to the gradients of varying likelihood loss to achieve the effect of concentrating on the attack of vulnerable modules. Note that the parameters of the base distribution are fixed, while the prior distribution parameters are predicted by CNN. As such, the adversarial attack works differently when attacking these two distributions.}
\label{pipeline}
\end{figure*}
\section{Related Work}
\subsection{Flow-based lossless compression}
According to Shannon's source coding theory \cite{information}, the limit of lossless compression is the entropy of the data distribution. Many efficient entropy coding algorithms can approach or reach this lower bound \cite{huffman,Arithmetic,duda2013asymmetric}, but only if we can explicitly evaluate the density functions. Deep models that can accomplish it can be divided into three categories. VAEs with Bits-Back technique \cite{hilloc,bitswap} can be used for efficient lossless coding. However, VAEs model ELBO, which leads to a theoretical gap between the actual distribution. This prevents the compression rate from being improved further \cite{ivpf}. Autoregressive models \cite{pixelcnn++,pixelsnail}, such as Pixel-CNN, maintain the best distribution estimation performance. However, the considerable time consumption caused by pixel-by-pixel inference makes the application unattainable. Some recent works implement local Autoregressive, which makes it achievable to accelerate inference for AR \cite{parallel,nelloc}.
Continuous flows    \cite{glow,realnvp,flowpp+,nice,resflow} use a reversible network structure to model the bijection from the image space to the latent space, providing an precise estimation of the distribution and ensuring acceptable speed.

However, continuous flows cannot be directly used in compression tasks because of their numerical loss in quantization \cite{idf,understanding}. Existing solutions for this can be divided into two categories. \cite{lbb,iflow} provides unique coding methods to efficiently encode the numerical loss caused by quantization. These methods can obtain slight quantization loss, but the specific continuous flow model they are based on \cite{flowpp+} has inferior resistance to adversarial noise and is difficult to proceed with adversarial training. Another approach is to restrict the structure of the flow to ensure a strict bijection of the discrete data to the discrete base distribution \cite{idf,idf++,ivpf}. For example, IDF and IDF++ use the rounding operator, while IVPF uses volume-preserving flows. The disadvantage of this approach is that this constraint can confine the expressivity of the transformation.
\subsection{Adversarial attack and defense}
In the context of a classification task, adversarial examples are slight and malevolently designed perturbations that change a model prediction label. Existing adversarial attacks can be roughly separated into two types: white-box attacks \cite{cw,onepixel,fgsm,pgd} and black-box attacks \cite{b1,b2,b3}. In the white-box scene, the parameters of the model can be obtained. While in the black-box setting, the adversary cannot fetch the model's parameters. Here we mainly consider white-box attacks.
Given a labeled data set $\mathcal{D}=\{(\mathbf{x}, y)\}_{i=1}^{N}$, adversarial example is aimed to change the prediction from DNN. (eg.$f(x)=y$, $f(x_{adv})\neq y$), under the premise of guarantees $\left\|\boldsymbol{x}_{a d v}-\boldsymbol{x}\right\|_{l_p} \leq \epsilon$.
The commonly used white-box attack methods include the Fast Gradient Sign Method (FGSM) \cite{fgsm},
$$
\boldsymbol{x}_{a d v}=\boldsymbol{x}+\alpha \cdot \operatorname{sign}\left(\nabla_{\boldsymbol{x}} \ell(f(\boldsymbol{x}), y)\right)
$$
and projected gradient descent (PGD) \cite{pgd},
\begin{equation}
 \!\boldsymbol{x}_{a d v}^{t+1}={\rm Proj}_{\epsilon}\left(\boldsymbol{x}_{a d v}^{t}+\alpha \cdot \operatorname{sign}\left(\nabla_{\boldsymbol{x}} \ell\left(f\left(\boldsymbol{x}_{a d v}^{t}\right), y\right)\right)\right)   
\end{equation}
PGD is equivalent to a multi-iteration version of FGSM and keeps the adversarial samples within the restricted domain utilizing a projection operator. The above attack methods have been verified to have sound effects on multiple data sets and different DNN networks.

There has been extensive research on improving the robustness of DNNs for defending against adversarial examples. One of the defense methods that can effectively withstand adversarial examples is adversarial training. The methodology of adversarial training can be seen as a min-max function optimization process:
$$
\min _{\theta} \mathbb{E}_{(\mathbf{x}, y) \sim \mathcal{D}} \left[ \max _{\boldsymbol{\delta} \in S} \ell(\boldsymbol{\theta}, \boldsymbol{x}+\boldsymbol{\delta}, y)\right]
$$
The inner maximization is equivalent to the generation process of adversarial examples, and outer minimization tries to reduce the loss of the model on adversarial examples.

Nevertheless, little effort has been devoted to the robustness of generative models, especially flow-based models. \cite{doknow} indicates that flow
may map OOD data to subdomains with high probability
densities. \cite{understanding,relaxing} found in the experiment that when the probability distribution of modeling is very complex, the flow will
become pathological; that is, the hidden variable will leak
mass outside of the support \cite{relaxing}. All of these imply the fragile
robustness of flow from different perspectives. \cite{adflow} explores the robustness of the flow-based model for the first time and theoretically points out that both adversarial attacks and adversarial training are practical on flow. It is indicated from experiments that some widely-used flow-based models do not have satisfactory adversarial robustness. However, its theoretical analysis under strong assumptions limits it to finding more effective attack and defense methods.

\section{Preliminaries}
In this section, we begin with a brief introduction to the normalizing flow  and its major components applied in lossless compression.


\subsection{Flow-based
Lossless Compression}
Normalizing flow is generally designed as a learnable bijection $F: \mathcal{X} \rightarrow \mathcal{Z}
$ from an unknown continuous distribution $p(\boldsymbol{x})$ to another tractable prior probability distribution $p(\boldsymbol{z})$, where  $\boldsymbol{x} \in \mathcal{X}$ and $ \boldsymbol{z} \in \mathcal{Z}$. To realize lossless compression with normalizing flows, the input data $ \boldsymbol{x}$ is transformed into the latent variable $\boldsymbol{x}$ that is encoded with entropy coding by a prior probability distribution. Formally, the log likelihood of $p_{\mathcal{X}}(\boldsymbol{x})$ can be calculated by 
$$
\log p_{\mathcal{X}}(\boldsymbol{x})=\log p_{\mathcal{Z}}(\boldsymbol{z}) 
+
\log\left|\operatorname{det}\left(\frac{\partial\boldsymbol{z} }{\partial \boldsymbol{x}^{\top}}\right)\right|
$$


\subsubsection{Coupling layer}
In order to model the bijective function $F(\cdot)$ efficiently, it can be decomposed into a combination of invertible coupling-layers, i.e., $F=f_{L} \circ f_{L-1} \circ \cdots \circ f_{1}$. This requires that each layer in the flow-based model preserves the reversibility, and the determinant of the
Jacobian matrix of each $f_l$, $l=1, 2, \cdots, L$, is easy to calculate. For this purpose, there are two types of known coupling-layers applied to flow-based models:
\begin{itemize}
\item \textbf{Additive coupling layers:}
\begin{equation}
\begin{aligned}
 f(\boldsymbol{x})_{1:s} &=\boldsymbol{x}_{1:s} \\
f(\boldsymbol{x})_{s+1:C} &=\boldsymbol{x}_{s+1:C}+t\left(\boldsymbol{x}_{1:s}\right)
\label{additive}
\end{aligned}
\end{equation}
\item \textbf{Affine Coupling Layers:}
\begin{equation}
\begin{aligned}
f(\boldsymbol{x})_{1:s} &=\boldsymbol{x}_{1:s} \\
f(\boldsymbol{x})_{s+1:C} &=\boldsymbol{x}_{s+1:C} \odot g\left(s\left(\boldsymbol{x}_{1:s}\right)\right)+t\left(\boldsymbol{x}_{1:s}\right)
\label{affine}
\end{aligned}
\end{equation}
\end{itemize}
Where $\boldsymbol{x}=[\boldsymbol{x}_{1:s},\boldsymbol{x}_{s+1:C}]$,  $\odot$ denotes the hadamard product, and $s(\cdot)$ and $t(\cdot)$ are parameterized neural networks. $g$ stands for scaling operation e.g., sigmoid or $\exp(\cdot)$.

Affine coupling layers is commonly considered to achieve better performance \cite{glow} and is adopted by many advanced flow-based models.

\subsubsection{Multi-Scale Architecture}
Note that a bijection demands that  $\boldsymbol{x}$ should have the same dimension as $ \boldsymbol{z}$'s, this suggests that each $f_l$ needs to remain the same. However, stacking more layers 
of $f_l$, $l=1, 2,\cdots, L$, leads to higher computational complexity by the curse of dimensionality. This fact conflicts with the original intention that deeper architecture can better approximate the image probability distribution in lossless compression. Multi-scale architecture provides a solution to this dilemma, which acts as the role of the skip-connection in ResNet. It allows the model to output  $\boldsymbol{z_k}$ with smaller subdimension directly via a factor-out layer after passing through some $f_l$, as shown in the figure.
Specifically, multi-scale structure decomposes the hidden prior distribution $\boldsymbol{z}$ into a product of multiple prior distributions $\{\boldsymbol{z}_i\}_{i=1}^{k}$, i.e. 
\begin{eqnarray}
\log p( \boldsymbol{z})&=&\log p(\boldsymbol{z}_{k+1})
+
\sum_{i=1}^{k}p(\boldsymbol{z}_i|\boldsymbol{z}_{t > i })\\
&=&\log p(\boldsymbol{z}_{k+1})
+
\sum_{i=1}^{k}p(\boldsymbol{z}_i|\boldsymbol{y}_i)
\end{eqnarray}
Where $k$ stands for the number of \emph{factor-out} layers. Many popular flow-based lossless models have been designed with this structure to obtain better convergence and reduced complexity. For simplicity, we call the module that maps $\boldsymbol{x}$ to $\boldsymbol{z}_{k+1}$ as \emph{main-flow} in the sequel.

\subsubsection{Entropy coding}
Given a $n$-dimensional discrete data $\boldsymbol{x}$, its probability density $p(\boldsymbol{x})$ is approximated by some probabilistic modelling technique. Entropy coding $EC(\cdot)$ is defined as an injective map that compress data losslessly to binary strings with minimum code length lower bounded by the entropy of $H(\boldsymbol{x})$  \cite{information}. Furthermore, Kraft and McMillan \cite{codinge1,codinge2}  point out that we can obtain better lossless compression results by more  effective density estimations, including the flow-based lossless compression methods considered in this paper. 

\section{Methodology}

In this section, we provide some theoretical results to support the empirical results on the vulnerability of flow-based models. To understand the architectural robustness of the flow models, we first show that 
the \emph{main-flow} module and \emph{factor-out} module have different robustness by the well-known Lipschitz property. Then, we design a more robust integer discrete flow (R-IDF) based on the global Lipschitz property proved by Berhrmann et al
\cite{understanding}.

\subsection{Robustness on the Multi-Scale Architecture}
Next we introduce the well-known Lipschitz continuity for our subsequent analysis. 

\begin{mydef}{\rm(Lipschitz Continuity)}. \\
A fuction $F(\cdot):\mathbb{R}^{n} \rightarrow \mathbb{R}^{n}$ is called {\rm $L$-Lipschitz continuous} if there exists a constant $L$ such that for any $x_{1}, x_{2} \in \mathbb{R}^{n}$, we have 
\begin{equation}\label{lip}
   \left\|F\left(x_{1}\right)-F\left(x_{2}\right)\right\| \leq L\left\|x_{1}-x_{2}\right\|
\end{equation}
where $L$ is called the Lipschitz constant. 
\end{mydef}

Noticed that \emph{main-flow} and \emph{factor-out} are two different modules of multi-scale architecture, we analyze their theoretical adversarial robustness, separately.

\subsubsection{Robustness on the Factor-out Layer}
We define factor-out layer as: $f_s(\boldsymbol{y},\boldsymbol{z})=\log \left(p_{q_{\phi}}(\boldsymbol{z})\right)$, where
$q_{\phi}$ stands for a prior distribution with parameters determined by $\mu_{\phi}(\boldsymbol{y})$, $\sigma_{\phi}(\boldsymbol{y})$. Here $\mu_{\phi}(\cdot)$ and $\sigma_{\phi}(\cdot)$ are two parameterized neural networks. In practice, $q_{\phi}$ is generally chosen as a  Gaussion distribution or mixed logistic distribution. With the definition of a factor-out layer, we can obtain 
the following lemma, whose proof is provided in the appendix. 

\begin{lemma}
\label{lemma1}
Assume that the non-linear transformations $
\mu_{\phi}(\cdot)$ and $\sigma_{\phi}(\cdot)$ are both $L$-Lipschitz.
$\boldsymbol{y}, \boldsymbol{z}\in\mathcal{X}\subset\mathbb{R}^n$,  $\|\boldsymbol{z}\|_{\infty} $, $\|\mu_{\phi}(\boldsymbol{y})\|_{\infty}$, and $\|\sigma_{\phi}(\boldsymbol{y})^{-1}\|_{\infty}$ are all bound by $b$. Let $\boldsymbol{z}$ and $\boldsymbol{z^{\star}}$ be samples from $q_{\phi}(Z)$ and 
$q^*_{\phi}(Z^*)$, where $q_{\phi}(Z)$, $q^*_{\phi}(Z^*)$ are normal distribution parameterized by $\{\mu_{\phi}(\boldsymbol{y}), \sigma^2_{\phi}(\boldsymbol{y}) \}$ and $\{\mu_{\phi}(\boldsymbol{y^{\star}}), \sigma^2_{\phi}(\boldsymbol{y^{\star}})\}$, respectively. If
$\|\boldsymbol{y}-\boldsymbol{y^{\star}}\|_2\leq \delta$ and $\|\boldsymbol{z}-\boldsymbol{z^{\star}}\|_2\leq \delta$, then 
\begin{equation}
\begin{aligned}
\lvert \log(p_{q_{\phi}}(\boldsymbol{z}))-\log(p_{q_{\phi}^{\star}}(\boldsymbol{z^{\star}}) \rvert \leq L*\delta*\sqrt{n}*O(b^5)
\end{aligned}
\end{equation}
\end{lemma}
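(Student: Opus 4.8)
The plan is to write out the Gaussian log-density explicitly and then bound the target difference by separating two effects: the displacement of the evaluation point $\boldsymbol{z}\to\boldsymbol{z}^{\star}$, and the displacement of the distribution parameters $(\mu_\phi(\boldsymbol{y}),\sigma_\phi(\boldsymbol{y}))\to(\mu_\phi(\boldsymbol{y}^{\star}),\sigma_\phi(\boldsymbol{y}^{\star}))$. Writing $\mu=\mu_\phi(\boldsymbol{y})$, $\sigma=\sigma_\phi(\boldsymbol{y})$ and assuming a diagonal (per-coordinate) covariance as in IDF/IDF++,
$$\log p_{q_\phi}(\boldsymbol{z}) = -\tfrac{n}{2}\log(2\pi) - \sum_{i=1}^n \log\sigma_i - \tfrac12\sum_{i=1}^n \frac{(z_i-\mu_i)^2}{\sigma_i^2},$$
so the quantity to control is a sum of a log-scale term and a Mahalanobis term. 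I would apply the triangle inequality to split $\lvert g(\boldsymbol{y},\boldsymbol{z})-g(\boldsymbol{y}^{\star},\boldsymbol{z}^{\star})\rvert$ into $\lvert g(\boldsymbol{y},\boldsymbol{z})-g(\boldsymbol{y},\boldsymbol{z}^{\star})\rvert$ (point moves, parameters fixed) plus $\lvert g(\boldsymbol{y},\boldsymbol{z}^{\star})-g(\boldsymbol{y}^{\star},\boldsymbol{z}^{\star})\rvert$ (parameters move, point fixed), here $g$ denoting the log-density, and then bound each piece, converting every coordinatewise $\ell_1$ sum into an $\ell_2$ norm by Cauchy--Schwarz (the origin of the $\sqrt{n}$ factor).

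For the point-displacement term, only the Mahalanobis part changes; factoring the difference of squares $(z_i-\mu_i)^2-(z_i^{\star}-\mu_i)^2=(z_i-z_i^{\star})(z_i+z_i^{\star}-2\mu_i)$ and using $\|\sigma^{-1}\|_\infty\le b$, $\lvert z_i\rvert,\lvert\mu_i\rvert\le b$ together with $\|\boldsymbol{z}-\boldsymbol{z}^{\star}\|_2\le\delta$ yields a contribution of order $b^3\sqrt{n}\,\delta$. For the parameter-displacement term I would handle the log-scale piece by the mean value theorem, using $\sigma_i\ge 1/b$ to bound $1/\sigma_i$ and the $L$-Lipschitzness $\|\sigma-\sigma^{\star}\|_2\le L\delta$, giving order $b\sqrt{n}\,L\delta$. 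The Mahalanobis piece I treat by viewing $h(\mu_i,\sigma_i)=(z_i-\mu_i)^2/\sigma_i^2$ as a function of the two parameters and bounding it along the segment joining $(\mu,\sigma)$ and $(\mu^{\star},\sigma^{\star})$: its partials are $\partial_\mu h=-2(z_i-\mu_i)/\sigma_i^2$ and $\partial_\sigma h=-2(z_i-\mu_i)^2/\sigma_i^3$, bounded by $4b^3$ and $8b^5$ respectively. Combining with $\|\mu-\mu^{\star}\|_2,\|\sigma-\sigma^{\star}\|_2\le L\delta$ produces the dominant term of order $b^5\sqrt{n}\,L\delta$, and collecting all pieces gives the claimed $L\cdot\delta\cdot\sqrt{n}\cdot O(b^5)$.

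The hard part will be the mean value theorem step for the Mahalanobis term: the derivative bounds $4b^3$ and $8b^5$ must hold \emph{uniformly} along the interpolation between $(\mu,\sigma)$ and $(\mu^{\star},\sigma^{\star})$, not merely at the endpoints. Since $\mu^{\star}$, $\sigma^{\star}$ and $\boldsymbol{z}^{\star}$ may exceed the endpoint bounds by up to $O(\delta)$, I would either restrict to small $\delta$ and work on a slightly enlarged box (absorbing the constant into the $O(\cdot)$), or propagate the endpoint bounds to $\boldsymbol{y}^{\star}$ via Lipschitz continuity so that $\|\mu_\phi(\boldsymbol{y}^{\star})\|_\infty$ and $\|\sigma_\phi(\boldsymbol{y}^{\star})^{-1}\|_\infty$ stay controlled. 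A secondary subtlety is that the point-displacement term does not itself carry a factor $L$; this is harmless, since the $b^5$ parameter term dominates and one may take $L$ (or $\max(L,1)$) as a global multiplier, which is exactly how the stated bound is phrased.
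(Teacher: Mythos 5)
Your proposal is correct and follows essentially the same route as the paper's proof: the identical triangle-inequality split into a point-displacement term (parameters fixed) and a parameter-displacement term (point fixed at $\boldsymbol{z}^{\star}$), Cauchy--Schwarz to extract the $\sqrt{n}$ factor, and mean-value-theorem bounds whose dominant contribution is the $(z_i-\mu_i)^2/\sigma_i^3$-type derivative of order $b^5$. The only differences are cosmetic --- the paper splits the Mahalanobis parameter change algebraically into two one-variable pieces instead of your two-variable MVT along the segment, and your explicit treatment of the two subtleties (uniformity of the bounds over the interpolation segment, and the missing $L$ factor in the point-displacement term, fixed by taking $\max(L,1)$) is in fact more careful than the paper, which silently absorbs both.
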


Lemma\ref{lemma1} implies that we 
must obey strict requirements to guaratee the robustness of the factor-out layer. For example, we not only require $\mu(\cdot)$, $\phi(\cdot)$ to be $L$-Lipschitz but also control the norm of the outputs. Since the upper bound is
determined by $b^5$, the factor-out layer tends to exhibit insufficient robustness when the size of $b$ is not regulated effectively, which reveals the  fact that even if the robustness of the main-flow can be guaranteed, the factor-out layer is still vulnerable to adversarial attacks as shown in our subsequent experiments. Furthermore, Lemma 1 demonstrates that the lower bound on the robustness also depends on the dimension of $\boldsymbol{z}$. This suggests that the factor-out layer closer to the input, tends to be less robust and more vulnerable to adversarial attack, which is consistent with our experiment results. 


\subsubsection{Robustness on the Main-flow} We define the 
main-flow as $f_m(\cdot): \boldsymbol{x} \mapsto \boldsymbol{z_{k+1}}$. Under the assumption that the latent variable $\boldsymbol{z_{k+1}} \sim \mathcal{N}\left(0,I_n\right)$, we can derive a lower bound on the robustness of the main-flow in the appendix. 
\begin{lemma}
\label{lemma2}
Assume that $f_m(\cdot): \boldsymbol{x} \mapsto \boldsymbol{z}$ is $L$-lipschitz, $\boldsymbol{x}\in \mathcal{X}\subset\mathbb{R}^n$, and $\boldsymbol{z} \sim \mathcal{N}\left(0,I_n\right)$. If $\|\boldsymbol{x}^{\star}-\boldsymbol{x}\|_2\leq \delta $, we have:
 \begin{equation}
\begin{aligned}
\log(p(\boldsymbol{z^{\star}}))-\log(p(\boldsymbol{z}))
\geq
-L\delta \|\boldsymbol{z}\|_{2}-\frac{\delta^2}{2}
\end{aligned}
\end{equation}
where $\boldsymbol{z}=f_m(\boldsymbol{x})$ and  $\boldsymbol{z^{\star}}=f_m(\boldsymbol{x^{\star}})$.
\end{lemma}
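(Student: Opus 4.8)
The plan is to exploit the fact that $\boldsymbol{z}$ is standard normal, so that the change in log-likelihood depends only on the change in squared Euclidean norm, and then to control that change through the Lipschitz bound on $f_m$ together with Cauchy--Schwarz. First I would write the standard-normal log-density explicitly, $\log p(\boldsymbol{z}) = -\tfrac{n}{2}\log(2\pi) - \tfrac{1}{2}\|\boldsymbol{z}\|_2^2$. Subtracting the two instances cancels the dimension-dependent constant and leaves
\begin{equation}
\log p(\boldsymbol{z^{\star}}) - \log p(\boldsymbol{z}) = \tfrac{1}{2}\left(\|\boldsymbol{z}\|_2^2 - \|\boldsymbol{z^{\star}}\|_2^2\right),
\end{equation}
which isolates the entire problem as one of bounding $\|\boldsymbol{z}\|_2^2 - \|\boldsymbol{z^{\star}}\|_2^2$ from below.

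Next I would introduce the displacement $\boldsymbol{\Delta} := \boldsymbol{z^{\star}} - \boldsymbol{z}$ and expand $\|\boldsymbol{z^{\star}}\|_2^2 = \|\boldsymbol{z} + \boldsymbol{\Delta}\|_2^2 = \|\boldsymbol{z}\|_2^2 + 2\langle \boldsymbol{z},\boldsymbol{\Delta}\rangle + \|\boldsymbol{\Delta}\|_2^2$, so that the difference collapses to $-2\langle\boldsymbol{z},\boldsymbol{\Delta}\rangle - \|\boldsymbol{\Delta}\|_2^2$ and the target quantity becomes $-\langle\boldsymbol{z},\boldsymbol{\Delta}\rangle - \tfrac{1}{2}\|\boldsymbol{\Delta}\|_2^2$. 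The one quantitative input needed is the Lipschitz hypothesis: since $f_m$ is $L$-Lipschitz and $\|\boldsymbol{x^{\star}}-\boldsymbol{x}\|_2 \le \delta$, we have $\|\boldsymbol{\Delta}\|_2 = \|f_m(\boldsymbol{x^{\star}}) - f_m(\boldsymbol{x})\|_2 \le L\delta$.

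Finally I would bound the two pieces separately, each in its worst-case direction. By Cauchy--Schwarz, $\langle \boldsymbol{z},\boldsymbol{\Delta}\rangle \le \|\boldsymbol{z}\|_2\|\boldsymbol{\Delta}\|_2 \le L\delta\|\boldsymbol{z}\|_2$, so $-\langle\boldsymbol{z},\boldsymbol{\Delta}\rangle \ge -L\delta\|\boldsymbol{z}\|_2$; and $-\tfrac{1}{2}\|\boldsymbol{\Delta}\|_2^2 \ge -\tfrac{1}{2}L^2\delta^2$. Combining gives $\log p(\boldsymbol{z^{\star}}) - \log p(\boldsymbol{z}) \ge -L\delta\|\boldsymbol{z}\|_2 - \tfrac{1}{2}L^2\delta^2$.

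There is almost no genuine obstacle here; the derivation is a short chain of identities and one application of Cauchy--Schwarz. The only point requiring care is the sign bookkeeping: because we seek a lower bound, we must bound the inner product from \emph{above} (to bound its negative from below) and likewise drop the quadratic term to its most negative value, keeping the factor $\tfrac12$ straight throughout. I would also flag that this route naturally produces the quadratic term $\tfrac{1}{2}L^2\delta^2$ rather than the stated $\tfrac{1}{2}\delta^2$; the two agree when $L \le 1$ (or when $\delta$ is read as the latent-space radius $L\delta$), so the intended normalization should be reconciled when writing up the final constant.
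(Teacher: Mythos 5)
Your proof is correct and takes essentially the same route as the paper's: the paper outsources the middle inequality to Theorem 3.1 of the cited adversarial-flow work, but that inequality is exactly your expansion of the Gaussian log-density combined with the bound $\|\boldsymbol{z}^{\star}\|_2 \le \|\boldsymbol{z}\|_2 + L\delta$ (equivalent to your Cauchy--Schwarz step), so the two arguments coincide. Your flag about the quadratic term is also accurate and worth keeping: the paper's own intermediate expression simplifies to $-L\delta\|\boldsymbol{z}\|_2 - \tfrac{1}{2}L^2\delta^2$, and its final line silently drops the $L^2$ factor (and writes $\|f_m(\boldsymbol{x}^{\star})\|_2$ where $\|f_m(\boldsymbol{x})\|_2 = \|\boldsymbol{z}\|_2$ is meant), so the stated $\tfrac{\delta^2}{2}$ is only valid under an implicit normalization such as $L \le 1$.
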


Note that when the flow can certify reversibility and
the probability density of the hidden variable can be concentrated around the base distribution, we have $\|f_m(\boldsymbol{x})\|_{2}=O(\|\boldsymbol{x}\|_{2})$. In practice, the norm $\|\boldsymbol{x}\|_{2}$ is generally bound by 1, which provides the adversarial robustness guarantee by the Lipschitz property of $f_m$. Nevertheless, as shown in the following section, if we adopt the affine coupling layers to achieve more acceptable performance, the Lipschitz property of $f_m$ become unattainable. Therefore, we need to consider the efficiency and robustness trade-off to design a 
more reasonable architecture of flow-based model.


\begin{table}[ht]
\centering
\begin{tabular}{ccccc} 
\toprule
     & Affine   & Additive & Multi-scale & BPD   \\ 
\midrule
Glow & $\surd$  & $\times$ & $\surd$     & 3.39  \\
Glow & $\times$ & $\surd$  & $\surd$     & 3.42  \\
Glow & $\times$ & $\surd$  & $\times$    & \textbf{4.38}$\downarrow$  \\
IDF-C & $\times$ & $\surd$  & $\surd$     & 3.3   \\
IDF-C & $\times$ & $\surd$  & $\times$    & \textbf{4.31}$\downarrow$  \\
\toprule
\end{tabular}
\caption{Evaluation for different modules in Glow and IDF-C (the continuous version of IDF)}
\label{multi-scaletest}
\end{table}

Based on Lemma 1 and Lemma 2, we can prove the following theorem on the architectural weakness of the multi-scale architecture:

\begin{theorem}
Given a flow model defined as $F(\cdot)$, assume that $\boldsymbol{x} \in \mathcal{X}\subset\mathbb{R}^n$, $\|\boldsymbol{x}\|_2\leq b \in \mathbb{R}^{+}$. If the main-flow $f_m(\cdot)$ is $L_1$-Lipschitz, factor-out layers are all $L_2$-Lipschitz and $\|\boldsymbol{x}_{pert}-\boldsymbol{x}_{clean}\|_2\leq \delta$ , then we have 
\begin{eqnarray}
&& L(F(\boldsymbol{x}_{pert}))-L(F(\boldsymbol{x}_{clean}))\nonumber\\
&\geq&-\sum_i^kL_2\delta\sqrt{dim_k}*O(b^5)-L_1\delta*O(b)-\frac{\delta^2}{2}
\end{eqnarray}
where k stands for the number of factor-out layers, $dim_k$ is the input's dimension of $k$-th factor-out layer, $L(\cdot)$ denotes the negative log-likelihood.
\end{theorem}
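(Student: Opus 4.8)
The plan is to reduce the theorem to a term-by-term application of Lemma~\ref{lemma1} and Lemma~\ref{lemma2}, using the multi-scale factorization of the log-density as the organizing principle. First I would expand the negative log-likelihood of the whole model through the factorization $\log p(\boldsymbol{z})=\log p(\boldsymbol{z}_{k+1})+\sum_{i=1}^{k}\log p(\boldsymbol{z}_i\mid\boldsymbol{y}_i)$ from the Preliminaries, so that $L(F(\boldsymbol{x}))=-\log p(\boldsymbol{z}_{k+1})-\sum_{i=1}^{k}\log p_{q_\phi}(\boldsymbol{z}_i\mid\boldsymbol{y}_i)$, the log-determinant terms vanishing for the additive and volume-preserving couplings under consideration (unit Jacobian determinant). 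Subtracting the clean and perturbed likelihoods then splits $L(F(\boldsymbol{x}_{pert}))-L(F(\boldsymbol{x}_{clean}))$ into a single main-flow contribution and $k$ factor-out contributions, each of which I can bound in isolation.

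Next I would estimate the individual pieces. For the main-flow term I would invoke Lemma~\ref{lemma2} with $\boldsymbol{z}_{k+1}=f_m(\boldsymbol{x}_{clean})$ and $\boldsymbol{z}_{k+1}^{\star}=f_m(\boldsymbol{x}_{pert})$, feeding in $\|\boldsymbol{x}_{pert}-\boldsymbol{x}_{clean}\|_2\leq\delta$ together with the concentration estimate $\|f_m(\boldsymbol{x})\|_2=O(\|\boldsymbol{x}\|_2)=O(b)$ noted after that lemma; this produces the $-L_1\delta\,O(b)-\frac{\delta^2}{2}$ part of the bound. For the $i$-th factor-out layer I would apply Lemma~\ref{lemma1} with the ambient dimension $n$ replaced by the local dimension $dim_k$, giving a $-L_2\delta\sqrt{dim_k}\,O(b^5)$ contribution, and summing over the $k$ split points yields the $-\sum_i^k L_2\delta\sqrt{dim_k}\,O(b^5)$ term. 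Collecting the three groups of estimates, and accounting for the single sign flip introduced by the convention $L=-\log p$, assembles the claimed inequality.

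The main obstacle is verifying that the hypotheses of Lemma~\ref{lemma1} are met at every factor-out layer. That lemma requires the \emph{local} perturbations to be controlled, $\|\boldsymbol{y}_i-\boldsymbol{y}_i^{\star}\|_2\leq\delta$ and $\|\boldsymbol{z}_i-\boldsymbol{z}_i^{\star}\|_2\leq\delta$, whereas the theorem only assumes such a bound at the network input. I would therefore have to show that the prefix of coupling layers carrying $\boldsymbol{x}$ to each split point is non-expansive, or otherwise propagate the input perturbation and absorb the resulting Lipschitz factor into $L_2$, so that the perturbation reaching every factor-out layer stays of order $\delta$. A secondary, purely bookkeeping point is that Lemma~\ref{lemma1} is a two-sided (absolute-value) estimate whereas Lemma~\ref{lemma2} is one-sided: to land on the stated lower bound on the negative log-likelihood I would use the appropriate side of each (and, for the main flow, its symmetric counterpart obtained by exchanging $\boldsymbol{x}_{clean}$ and $\boldsymbol{x}_{pert}$), which is also where the sign of the $\frac{\delta^2}{2}$ term is fixed. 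Finally I would observe that the $O(b^5)$ factor-out term dominates the $O(b)$ main-flow term, which is exactly the architectural asymmetry the theorem is meant to expose.
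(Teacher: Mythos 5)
Your proposal follows essentially the same route as the paper's own proof: expand the likelihood difference through the multi-scale factorization (dropping Jacobian terms for the discrete/volume-preserving setting), apply Lemma~1 to each factor-out contribution with local dimension $dim_k$, apply Lemma~2 (via the $\|f_m(\boldsymbol{x})\|_2 = O(b)$ bound) to the main-flow contribution, and sum the three groups of estimates. You are in fact slightly more careful than the paper on one point: the paper invokes Lemma~1 at each factor-out layer without explicitly verifying that the intermediate perturbations $\|\boldsymbol{y}_i - \boldsymbol{y}_i^{\star}\|_2$ and $\|\boldsymbol{z}_i - \boldsymbol{z}_i^{\star}\|_2$ stay of order $\delta$, a propagation step you correctly identify as needing the Lipschitz property of the prefix sub-flow (and which the paper implicitly absorbs into its constants).
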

Notice that we are focusing on mappings between discrete distributions, so we are not concerned about the Jacobian determinants' effect. This does not change our conclusions; see the appendix for further discussion.



\subsection{Auto-Weighted PGD Attack (AW-PGD)}

Based on the previous theoretical analysis, we choose Integer Discrete Flow (IDF) as our objective lossless compression model to design a new auto-weighted PGD attack (AW-PGD). More  arguments on the remaining flow-based compression models can be found in the Appendix. Integer Discrete Flow (IDF) is a discrete flow model with multi-scale architecture. It consists of two factor-out modules that output the prior likelihood in stages. Notice that the PGD algorithm can be directly migrated to the flow-based model and obtain a pleasing attack effect. We first reformulate the prior likelihood output by the multiscale architecture as a muti-loss function:
\begin{eqnarray}
LOSS_{IDF}&=&-\sum_{i=1}^{2}\log p(\boldsymbol{z}_i|\boldsymbol{y}_i)-\log p(\boldsymbol{z}_3)\nonumber\\
&=&loss _{fo1}+loss_{fo2}+loss _{mf},
\end{eqnarray}
where $loss _{fo1}, loss_{fo2}$, and $loss _{mf}$ denotes the loss outputted by the $1$-st factor-out layer, $2$-st factor-out layer, and main-flow, respectively. During the training stage, these three losses enjoy the same weight because they all contribute to the distribution estimation of the hidden variables, but the identical weight setting is not reasonable for the design of an adversarial attack. 

In theory, Lemma \ref{lemma2} implies that the adversarial robustness of different modules that output $loss_{fo1}$, $loss_{fo2}$, and $loss_{mf}$ is entirely different, the lower bound of the factor-out Layer’s robustness is influenced by the dimension size of the input variables. This motivates us to assign higher weight to modules with poorer robustness so that the direction of the adversarial attack is more biased towards this module. Therefore, we propose Auto Weight Projected Gradient Descent (AW-PGD), a method for weighting muti-loss in the process of iterative attack. In particular, we calculate the increment of separate loss in the $i$-th iteration of the PGD algorithm and assign particular weights based on this increment, then we obtain the overall loss as follows:
$$LOSS^{i}=w_{\Delta_{fo1}}^iloss^i_{fo1}+w_{\Delta_{fo2}}^iloss^i_{fo2}+w_{\Delta_{mf}}^iloss^i_{mf},$$
where the increment is defined by
$$\Delta_j^i={\rm max}(loss_j^i-loss_j^{i-1},0),$$
and the normalized weights is obtain by the softmax:
$$w_{\Delta_{fo1}}^i,w_{\Delta_{fo2}}^i,w_{\Delta_{mf}}^i = {\rm Softmax}(\Delta_{fo1}^i,\Delta_{fo2}^i,\Delta_{mf}^i).$$

From another perspective, AW-PGD introduce the prior information of module robustness difference to adjust preferences during PGD attack. Note that the IDF ensures a bijective mapping from $\mathbb{Z}^{d}$ to $\mathbb{Z}^{d}$. This denotes that an integer with 8 bits can represent the output component of any intermediate layer from 0 to 255. Therefore, when the AW-PGD algorithm is applied to IDF, we will specify an upper bound of 8 bits per dimension for each loss. Then we 
can set the loss to this upper bound when it exceeds. The full algorithmic process is provided in the \textbf{supplementary material}

\subsection{Defense via Robust Integer Discrete Flow}

To design a more robust integer discrete flow to defend the adversarial attack, we recall the Lipschitz property proved by Behrmann et al [\cite{understanding}, Theorem 2] that 
the affine layers cannot guarantee global Lipschitz property, leading to poorer robustness compared with the additive layers. Therefore, 
we need to employ additive coupling layers to the main-flow structure, which guarantees global Lipschitz property and tighter Lipschitz bounds.
Furthermore, we need to utilize multi-scale architecture to stack as many additive layers as possible to achieve comparable performance. To this end, the principle of our defense is an appropriate adjustment to make the multi-scale architecture robust via a regularization method, termed robust integer discrete flow (R-IDF).

Inspired by Lemma 1 that the local Lipschitz bound of multi-scale architecture is jointly controlled by $\|\mu_{\phi}(\boldsymbol{y})\|_{\infty}$, $\|\sigma_{\phi}(\boldsymbol{y})^{-1}\|_{\infty}$, and the Lipschitz property of the factor-out layer, we use the spectral norm regularization of the convolution kernel in the factor-out layer to ensure its Lipschitz property and make its output norms as small as possible. In practice, spectral norm regularization and weight decay are used to constrain the factor-out layer: 
\begin{equation}
\begin{aligned}
LOSS_{R}=LOSS_{IDF}+\rho_1\|W_{fo}\|_2^2+\rho_2\sigma(W_{fo})^2,
\end{aligned}
\end{equation} 
where $W_{fo}$ represents the parameter matrix of the fact-out layer,  $\sigma(W)$ and $\|W_{fo}\|_2^2$ is the spectral norm, and $\ell_2 $ norm for $W$ with coefficient $(\rho_1,\rho_2) $.

Since two factor-out layers in IDF have different robustness due to input dimension differences, we can obtain substantial robustness gains by applying the above constraint only to the first factor-out layer, and this constraint barely loses accuracy on clean samples. Furthermore, our regularization method can be combined with hybrid adversarial training. As shown in the subsequent experiments, the combination of R-IDF and the hybrid adversarial training can further improve the robustness of the model, making it equal to or even transcend the robustness of adversarial training without sacrificing the accuracy on clean samples. 




\section{Experiment}

\subsection{Experiment setting}
\subsubsection{Datasets and Evaluation Metrics} Following the settings in \cite{doknow,adflow}, we verify the effectiveness of our methods over \textbf{CIFAR-10} \cite{cifar} and \textbf{SVHN} \cite{svhn} datasets. We adopt the standard metric in image explicitly probability modelling tasks and image compression tasks, Bits per dimension (BPD) and compression rate (CR), to evaluate the effectiveness of our method. BPD can be derived from NLL (negative log-likelihood).
When the image has not undergone any compression operation, the BPD is 8. So the compression rate is $8/{\rm BPD}$. (We provide the implementation details in Sec.1 of the \textbf{Supplementary Material}).
\subsection{Attack Evaluation}
In this chapter, we verify the effectiveness of our attack against Integer Discrete Flow (IDF) and some most widely-adopted continuous flow models. Note that we also provide analysis for the transferability of the attack methods in Sec.2 of the \textbf{Supplementary Material}.\\
%
\begin{figure}[ht]
\begin{minipage}[t]{0.5\linewidth}
\centering
\includegraphics[width=1.6in]{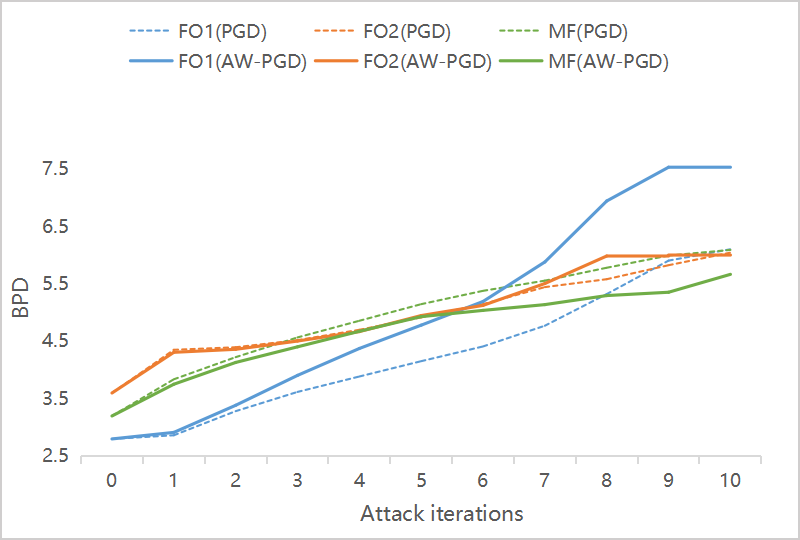}
\caption*{(a) perturbation $\epsilon=1$}
\end{minipage}%
\begin{minipage}[t]{0.5\linewidth}
\centering
\includegraphics[width=1.6in]{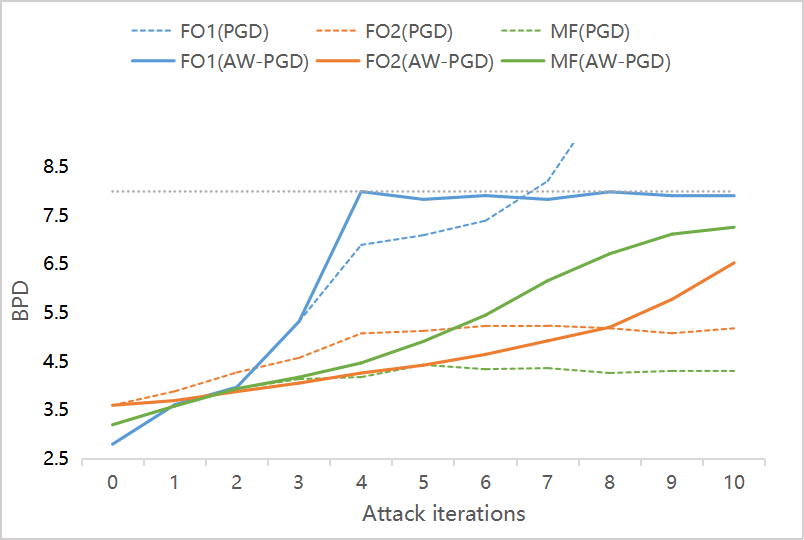}
\caption*{(b) perturbation $\epsilon=2$}
\end{minipage}
\caption{Evaluation results for AW-PGD.} 
\label{fig:IDF attack}
\end{figure}
\vspace{-5mm}
\subsubsection{Attack on IDF}
We first perform our proposed AW-PGD attack against IDF and compare it with the baseline method PGD. Note that due to the compression task, we set the bound in AW-PGD to 8. Fig. \ref{fig:IDF attack} shows that AW-PGD can concentrate the gradient on modules with inferior robustness even with small perturbation (i.e. $\epsilon = 1$). When the perturbation magnitude is higher (i.e. $\epsilon = 2$), the PGD algorithm will obsess over increasing  $loss_{fo1}$ and ignore the gradients of other losses. Nevertheless, as mentioned before, the loss of more than 8-BPD does not result in an effective attack on the compression task. 
The specific attack effect is shown in Tab. \ref{tab:IDF attack}, and we can see that AW-PGD has a significant improvement (38\%) when $\epsilon=1$,  which demonstrates that our attack is extremely effective even over negligible perturbations . When the attack intensity becomes larger (i.e. $\epsilon \geq 2$ ), while the superiority of AW-PGD is partly masked by the fragile robustness of IDF, it still achieves the best results in all the experiments. 

\begin{table}[ht]
\centering
\scalebox{0.86}{
\begin{tabular}{cc|cc} 
\toprule
\multicolumn{2}{c|}{Attack Strength}     & CIFAR10                  & SVHN                      \\ 
\midrule
\multicolumn{2}{c|}{$\epsilon$=0 (Clean sample)}   & 2.41 & 3.79  \\ 
\midrule
\multirow{2}{*}{$\epsilon$=1} & AWPGD             & \textbf{1.24} & \textbf{1.36}  \\
                     & PGD               & 1.39 & 1.42  \\ 
\midrule
\multirow{2}{*}{$\epsilon$=2} & AWPGD             & \textbf{1.04} & \textbf{1.12}  \\
                     & PGD               & 1.05 & 1.17  \\ 
\midrule
\multirow{2}{*}{$\epsilon$=3} & AWPGD~            & \textbf{1.02} & \textbf{1.10}  \\
                     & PGD               & 1.04 & 1.20  \\
\bottomrule
\end{tabular}
}
\caption{Adversarial attack against IDF. \emph{The
evaluation metric is CR, where lower CR
means worse result.}}
\label{tab:IDF attack}
\end{table}

\begin{table*}[tbh]
\centering
\scalebox{0.85}{
\begin{tabular}{ccccccccccccc} 
\toprule
\multicolumn{13}{c}{CIFAR-10}                                                                                         \\ 
\midrule
              & Clean & \multicolumn{3}{c}{Rand-Noise} &  & \multicolumn{3}{c}{PGD} &  & \multicolumn{3}{c}{AW-PGD}  \\ 
\cmidrule{2-5}\cmidrule{7-9}\cmidrule{11-13}
              & $\epsilon$=0   & $\epsilon$=1  & $\epsilon$=2  & $\epsilon$=3              &  & $\epsilon$=1  & $\epsilon$=2   &   $\epsilon$=3   &  & $\epsilon$=1  & $\epsilon$=2   & $\epsilon$=3          \\ 
\midrule
IDF-C         & 3.3   & 4.13 & 4.51 & 5.08             &  & 6.51 & 7.62 &   15.9  &  & \textbf{7.13}  & \textbf{11.32}   & \textbf{18.13}        \\
GLOW          & 3.39  & 4.21 & 4.63 & 5.11             &  & 6.6  & 14.12 &   25.31 &  &\textbf{ 7.62}  & \textbf{17.54}  & \textbf{28.27}        \\
RealNVP       & 3.53  & 4.35     & 4.59     &  4.87           & & 6.82 & 7.47  & 8.14           &  & \textbf{6.91}      & \textbf{7.64}       &  \textbf{8.16 }           \\
Flow++        & 3.21  & 4.06 & 4.58 & 5.02             &  & 6.69 & 9.83  &   10.99 &  & -    & -     & -            \\
Residual flow & 3.12  & 4.03 & 4.42 & 4.99             &  & 5.62 & 7.02  &   8.12  &  & -    & -     & -            \\ 
\midrule
R-IDF         & 3.36  & 4.2      &  4.66   & 4.99                 &  & \underline{4.91} & \underline{5.71}  &   \underline{5.92}  &  & 4.99 & 5.7   & 5.88         \\
\bottomrule
\toprule
\multicolumn{13}{c}{SVHN}                                                                                         \\ 
\midrule
              & Clean & \multicolumn{3}{c}{Rand-Noise} &  & \multicolumn{3}{c}{PGD} &  & \multicolumn{3}{c}{AW-PGD}  \\ 
\cmidrule{2-5}\cmidrule{7-9}\cmidrule{11-13}
              & $\epsilon$=0   & $\epsilon$=1  & $\epsilon$=2  & $\epsilon$=3              &  & $\epsilon$=1  & $\epsilon$=2   &   $\epsilon$=3   &  & $\epsilon$=1  & $\epsilon$=2   & $\epsilon$=3          \\ 
\midrule
IDF-C         & 2.09  & 3.35 & 4.24 & 4.63            &  & 5.05 & 8.20 &  10.43  &  & \textbf{6.18}  & \textbf{9.43}   & \textbf{12.11}       \\
GLOW          & 2.12  & 3.25 & 4.10 & 4.59            &  & 6.23  & 25.46 & 60.71 &  & \textbf{8.91}  & \textbf{27.19}  & \textbf{71.23  }      \\
RealNVP       & 2.30  & 3.15      & 3.85     & 4.38                 &  & 5.12  &  6.82     & 7.89        &  &  \textbf{5.57}     & \textbf{7.26}       & \textbf{8.25 }            \\
\midrule
R-IDF         & 2.17  & 3.3      &  4.01   & 4.57               &  & \underline{4.54} & \underline{5.67}  & \underline{5.79}  &  & 4.54 & 5.65   & 5.80         \\
\bottomrule
\end{tabular}
}
\caption{Evaluate PGD and AW-PGD attacks on continuous flow. The metric used here is BPD, with \emph{higher values representing worse distribution estimation performance} of the model.}
\label{atackc}

\end{table*}


\subsubsection{Attack on continuous flow}
In this section, we will use PGD and AW-PGD for adversarial attacks on the continuous flow model. Note that Flow++ and Residual Flow do not utilise multi-scale architectures, so we did not implement AW-PGD for it.
Firstly, it is shown in Table\ref{atackc} that none of these advanced flow models has acceptable adversarial robustness. Even if the PGD attack method is directly used, the output NLL will approach or even exceed 8 BPD at the attack strength $\epsilon \geq 2$. It indicates that those continuous flow models are challenging to design into robust compression algorithms. In contrast, R-IDF reveals significant robustness advantages under different attack strengths.

Additionally, Tab. \ref{atackc} clearly shows that AW-PGD outperforms PGD on all models with multi-scale architecture. Without specifying the downstream task, the NLL outputted by the continuous flow is not limited as it is for the compressed task. It can be seen that AW-PGD can also show significant advantages when the attack intensity is high . 
Noticed that R-IDF strengthens the robustness of multi-scale architecture through regularization. Therefore, there is no significant difference between AW-PGD and PGD in attacking R-IDF task.
\vspace{-2mm}
\subsection{Defense Evaluation}
We first implemented the two defense methods mentioned in \cite{adflow}, adversarial training and hybrid adversarial training, on IDF. We further evaluate the defense effect of our proposed R-IDF and R-IDF combined with hybrid adversarial training. The ablation study for R-IDF is provided in the Sec.2 of the \textbf{Supplementary Material}.
\begin{table}[ht]
\centering
\scalebox{1}{
\begin{tabular}{cccccc} 
\toprule
\multicolumn{6}{c}{CIFAR-10}                        \\ 
\midrule
$\epsilon$ & CLEAN & Hybrid.~ & Adv & R-IDF & R-IDF$\dag$ \\ 
\midrule
0 & 2.41~ & 2.27~ & 1.85~  & \textbf{2.38}~ & 2.27~          \\
1 & 1.35~ & 1.67~ & 1.70~  & 1.63~ & \textbf{1.78}~          \\
2 & 1.04~ & 1.57~ & 1.60~  & 1.40~ & \textbf{1.63}~          \\
3 & 1.03~ & 1.55~ & 1.57~  & 1.35~ & \textbf{1.57}~          \\
4 & 1.01~ & 1.51~ & 1.51~  & 1.34~ & \textbf{1.52}~          \\
5 & 1.00~ & 1.39~ & 1.44~  & 1.32~ & \textbf{1.44}~          \\ 
\bottomrule
\toprule
\multicolumn{6}{c}{SVHN}                            \\ 
\midrule
$\epsilon$ & CLEAN & Adv.~ & Hybrid & R-IDF & R-IDF$\dag$   \\ 
\hline
0 & 3.79~      & 2.75~      & 3.19~       & \textbf{3.67}~      &  3.18~              \\
1 & 1.42~      & 1.93~      & 1.90~          & 1.76~      &  \textbf{1.94}~              \\
2 & 1.17~      & 1.71~     &  1.68~      & 1.41~     &   \textbf{1.73}~             \\
3 & 1.20~      & \textbf{1.62}~      &  1.54~      & 1.38~      &  1.61~              \\
4 & 1.02~      & 1.53~      &  1.50~      & 1.23~      &  1.53~              \\
5 & 1.00~      & \textbf{1.42}~      &  1.29~      & 1.21~     &   1.38~   \\
\bottomrule
\end{tabular}}

\caption{Evaluation results for defense.  The evaluation metric is CR, \emph{where lower CR means  worse result.} $\dag$ stands for combination with hybrid adversarial training.}
\label{defense}
\end{table}

From Tab. \ref{defense}, we can see that R-IDF has the highest compression ratio among all defense methods under clean samples and only decreases by about 1\% compared to the clean model. It is worth noting that the training time of R-IDF (6.5min/epoch) is only 10 \% compared with the adversarial training method (69min/epoch).  However, its robustness is not significantly decreased compared with the adversarial training method. The R-IDF model combined with hybrid adversarial training can further improve the robustness, reaching or even exceeding the robustness of adversarial training. The compression rate on clean samples is improved by about 22\% compared with adversarial training. Further, the adversarial training for Flow++ is 1100min/epoch, which is 170x and 16x that of R-IDF and Hybrid R-IDF, respectively. This shows that the effectiveness of adversarial training arises from unbearable training time cost.
\section{Conclusion}
We consider the architectural robustness of flow-based lossless compression models, focusing on the effect of the inner structures under adversarial attacks. Then we provide the theoretical analysis of the fragile robustness via the Lipschitz property. Inspired by the theoretical insights, we provide a new attack, AW-PGD, which often outperforms PGD. We also propose a new defense, R-IDF, a regularization method for multi-scale architecture, which is significantly more robust than other flow models while maintaining comparable model performance. On the experimental side, we verify that our proposed attack and defense methods are superior to the previous work. To the best of our knowledge, this is the ﬁrst lossless compression method that takes the architectural robustness into account.

\bibliography{aaai22}

\title{Supplementary Material:
Multi-Scale Architectures Matter:
On the Adversarial Robustness of Flow-based}
\onecolumn

\section{Supplementary Material: \\
Multi-Scale Architectures Matter: \\
On the Adversarial Robustness of Flow-based Lossless Compression}

\subsection{Contente Outline}
This is the supplementary of "Multi-Scale Architectures Matter: On the Adversarial Robustness of Flow-based Lossless Compression". The content of this manuscript will be organized as following:
\begin{itemize}
\item \textbf{In Section 1} we provide details for experiment setting.
\item  \textbf{In Section 2} we provide the experimental evaluation of attack universality and ablation experiments of R-IDF.
\item \textbf{In Section 3} we provide the full pseudocode of the AW-PGD algorithm.
\item \textbf{In Section 4} we provide the details of the proofs of our main theoretical results on the robustness of flow models.
\item \textbf{In Section 5}
we provide more visualizations for the experiments in the main text.
\end{itemize}

\subsection{Section 1: Experiment Detail}
\textbf{IDF, RealNVP} were
trained with default values given in their respective implementations. For \textbf{Glow} model evaluated on CIFAR10, we used the pretrained model. For \textbf{Glow} model evaluated on SVHN, we trained with default values given in the \cite{glow}. \textbf{Residual-flow} used pre-train model provided by \cite{resflow}. For \textbf{Flow++}, due to the limitation of GPU, we only use 4 batch-size while original setting is 64. Further, we implemented flow++ with only 400 training epochs. Although the model does not converge, according to \cite{flowpp+}, this already reflects the nature of the model.

We use $\ell_{\infty}$ as the bound in all attack and defense evaluations. For attack evaluation,we both use $iter=10$ for PGD and AW-PGD. For defense evaluation, we select $(\rho_1,\rho_2)=(2, 0.5)$ as the regularization coefficient of R-IDF. Further,  Adversarial training, Hybrid training, R-IDF(Hybrid), were all trained with $iter=10$ and $\epsilon=5$ attack iterations. According to \cite{adflow}, the mixing rate in Hybrid and R-IDF(Hybrid) are set to 0.5. 

All the above experiments were training on two 2080Ti and all models are implemented in the PyTorch framework.
\vspace{-2mm}
\subsection{Section 2: More Experimental Results}
\subsubsection{Universality of adversarial samples}
Fig.\ref{fig:universal} shows the universality of adversarial samples generated by AW-PGD. 
We are doing the following experiments: 1) randomly pick one of the samples $\boldsymbol{x}$ from all test sets. 2) Generate adversarial noise $\boldsymbol{x_{\Delta}}$ using that sample $\boldsymbol{x}$. 3) Transfer the adversarial sample to other samples. 4) Calculate the perturbation of the image except $\boldsymbol{x}$ after being disturbed by $\boldsymbol{x_{\Delta}}$. 5) Repeat the above operation twenty times to calculate the average perturbation.
\begin{figure}[ht]
\begin{center}
\includegraphics[width=0.9\columnwidth]{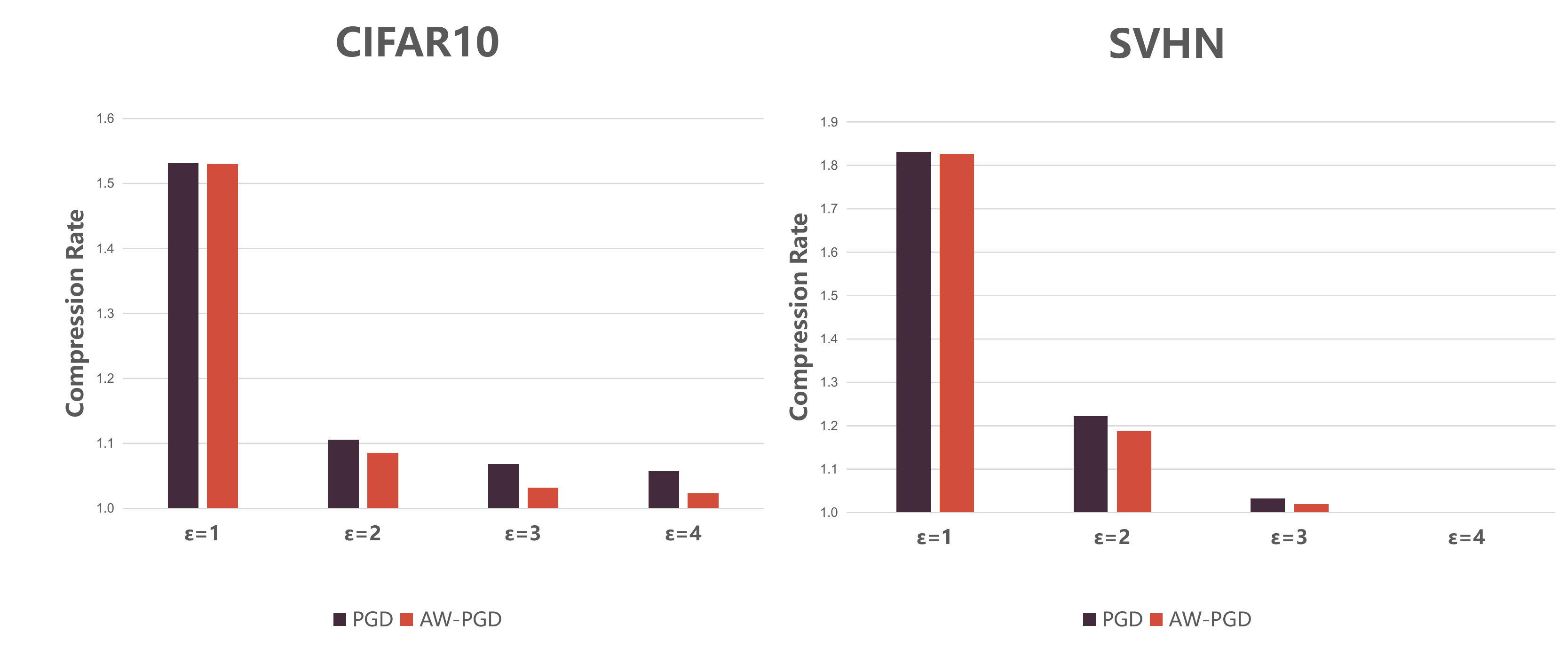}
\end{center}
  \caption{Visualization results for universality}
\label{fig:universal}
\end{figure}

As shown in the Table\ref{tabel:universal}, it is evident that the adversarial noise generated by AW-PGD lead to better universality. Although this phenomenon becomes less significant as the attack strength are too low,  it still indicates that we can significantly reduce the compression effect of the model by transferring the weak adversarial noise generated on only part of the data to the rest of the data.
\begin{table}[ht]
\centering
\setlength{\tabcolsep}{5mm}{\scalebox{0.8}{
\begin{tabular}{ccc|cc} 
\toprule
    & \multicolumn{2}{c|}{CIFAR10} & \multicolumn{2}{c}{SVHN}  \\ 
\midrule
    & PGD     & AW-PGD             & PGD     & AW-PGD          \\ 
\midrule
$\epsilon$=1 & 1.5316~ & \textbf{1.5296}~            & 1.8307~ & \textbf{1.8265}~         \\
$\epsilon$=2 & 1.1058~ & \textbf{1.0852}~            & 1.2227~ & \textbf{1.1871}~         \\
$\epsilon$=3 & 1.0681~ & \textbf{1.0322}~            & 1.0325~ & \textbf{1.0192}~         \\
$\epsilon$=4 & 1.0573~ & \textbf{1.0234}~            & 1.0013~ & \textbf{1.0005}~         \\
$\epsilon$=5 & 1.0000~       & 1.0000~                  & 1.0000~        & 1.0000~               \\
\bottomrule
\end{tabular}}}
\caption{Evaluation results for universality. The evaluation metric is CR, \emph{where lower CR means  worse result.}}
\label{tabel:universal}
\end{table}

\subsubsection{Ablation Study}
We ran the following ablations of our model: IDF vs R-IDF(weight decay only) vs R-IDF (spectral regularization only) vs R-IDF. The training parameters of all models are the same as above. As we can see in Figure\ref{Ablation}, the most significant effect is weight decay. Furthermore, the combination further improves robustness, indicating that the regularized constraints used in R-IDF are practical. It is also compatible with our theoretical analysis.
\begin{figure}[ht]
\begin{center}
\includegraphics[width=0.75\columnwidth]{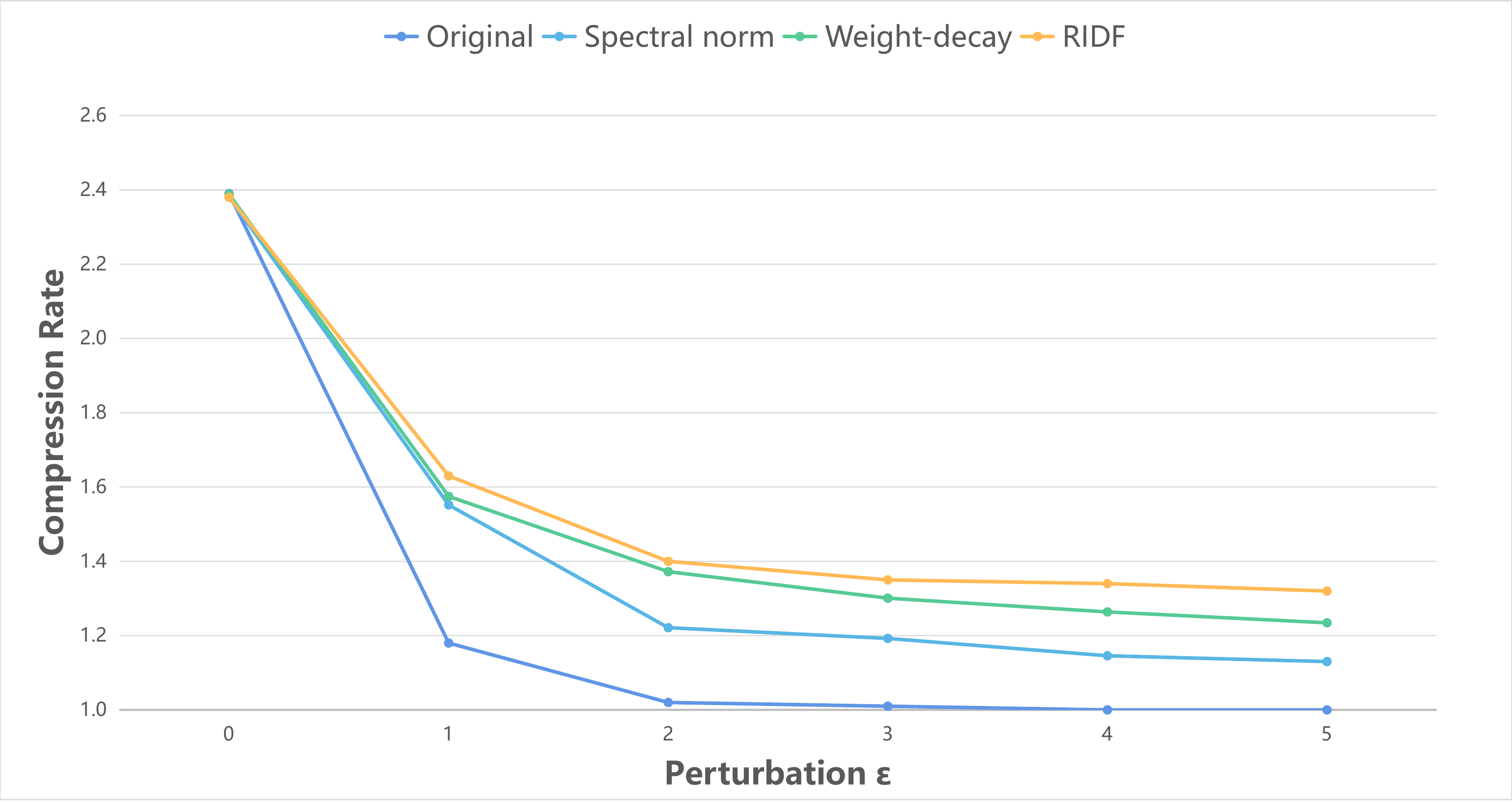}
\end{center}
  \caption{Ablation for R-IDF}
\label{Ablation}
\end{figure}
\vspace{-5mm}
\subsection{Section 3: Algorithm Outline}

We present the idea of the AW-PGD algorithm in the main text, and in this chapter, we will detail the workflow of the AW-PGD. AW-PGD introduce the prior information of module robustness difference by weighting muti-loss in the process of iterative attack. 
\begin{algorithm}[H]
    \caption{Auto-Weighted Projected Gradient Descent}
    {\bf Input:} 
The flow-base model $F(\cdot)$, the clean sample $\boldsymbol{x}$, attack strenth $\epsilon$, the number of iterations $m$, the steps size $\alpha$, restriction of image compression $bound$

\textbf{Output:}
adversarial sampe $\boldsymbol{x}^{\star}=\boldsymbol{x}+\delta_{\boldsymbol{x}}$

\begin{algorithmic}
  \STATE $loss_{1}^0,loss_{2}^0,loss_{3}^0= F(x)$
  \WHILE{$i=1,\dots ,m$}
    \STATE $loss_{1}^i,loss_{2}^i,loss_{3}^i= F(\boldsymbol{x}+ \delta_{\boldsymbol{x}}^i)$
    \IF{$loss_j>bound$}
        \STATE$loss_j=bound$
    \ENDIF
    \STATE$\Delta_j^i={\rm max}(loss_j^i-loss_j^{i-1},0)$
    \STATE$w_{\Delta_{1}}^i,w_{\Delta_{2}}^i,w_{\Delta_{3}}^i = {\rm Softmax}(\Delta_{1}^i,\Delta_{2}^i,\Delta_{3}^i)$
    \STATE $LOSS^i=\sum _{j=1}^3 w_{\Delta_{j}}^iloss_j^i$
    \STATE $\delta^i_{\boldsymbol{x}}=
    {\rm Proj}\left(\delta_{\boldsymbol{x}}^{i-1}+\alpha\cdot sign\left(
    \nabla_{\theta} LOSS^i\right)\right)$
  \ENDWHILE
\end{algorithmic}
\label{alg}
\end{algorithm}

\begin{spacing}{1.5}
\subsection{Section 4: Proofs of Main Theoretical Results on the Robustness of  Flow-based models}

\begin{lemma}:
Assume the non-linear transformations $
\mu_{\phi}(\cdot)$ and $\sigma_{\phi}(\cdot)$are both L-Lipschitz.
$\boldsymbol{x}, \boldsymbol{z}\in \mathcal{X},\mathcal{X}\subset\mathbb{R}^n$. $\|\mathbf{z}\|_{\infty}\leq b_1$, $\|\mu_{\phi}(\boldsymbol{x})\|_{\infty}\leq b_2$, $\|\sigma_{\phi}(\boldsymbol{x})^{-1}\|_{\infty}\leq b_3$,
$b_1,b_2,b_3 \in \mathbb{R}^{+}$
, let $b={\rm max}(b_1,b_2,b_3)$. Finally, let $\boldsymbol{z}$ and $\boldsymbol{z^{\star}}$ be sampled from $q_{\phi}(Z)$ and
$q^*_{\phi}(Z^*)$, where $q_{\phi}(Z)$, $q^*_{\phi}(Z^*)$ are normal distribution parameterized by $\{\mu_{\phi}(\boldsymbol{y}), \sigma^2_{\phi}(\boldsymbol{y}) \}$ and $\{\mu_{\phi}(\boldsymbol{y^{\star}}), \sigma^2_{\phi}(\boldsymbol{y^{\star}})\}$, respectively. If
$\|\boldsymbol{y}-\boldsymbol{y^{\star}}\|_2\leq \delta$ and $\|\boldsymbol{z}-\boldsymbol{z^{\star}}\|_2\leq \delta$, $\delta \in \mathbb{R}^{+}$, then we have 
$$\lvert \log(p_{q_{\phi}}(\boldsymbol{z}))-\log(p_{q_{\phi}^{\star}}(\boldsymbol{z^{\star}}) \rvert \leq L*\delta*\sqrt{n}*C(B)=L*\delta*\sqrt{n}*O(b^5)$$
where $C(B)=C_1(B)+C_2(B)$,
$C_1(B)=(b_1+b_2)^2b_3^3+(b_1+b_2)b_3^2+b_3$, and 
$C_2(B)=(b_1+b_2)b_3$.
\label{lemma1}
\end{lemma}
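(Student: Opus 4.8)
The plan is to treat the Gaussian log-density coordinate-wise and bound its variation by a first-order (mean-value) argument in the three scalar arguments $z_i$, $\mu_i$, $\sigma_i$. First I would write, for a diagonal normal,
$$\log p(\boldsymbol z;\mu,\sigma)=-\tfrac{n}{2}\log(2\pi)-\sum_{i=1}^n\Big(\log\sigma_i+\tfrac{(z_i-\mu_i)^2}{2\sigma_i^2}\Big),$$
so that the additive constant cancels in the difference and it suffices to control the per-coordinate function $g(z,\mu,\sigma)=-\log\sigma-\tfrac{(z-\mu)^2}{2\sigma^2}$.

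Next I would split the difference into a parameter-change part and a point-change part via the telescoping
$$\log p_{q_\phi}(\boldsymbol z)-\log p_{q_\phi^\star}(\boldsymbol z^\star)=\big[G(\boldsymbol z,\mu,\sigma)-G(\boldsymbol z,\mu^\star,\sigma^\star)\big]+\big[G(\boldsymbol z,\mu^\star,\sigma^\star)-G(\boldsymbol z^\star,\mu^\star,\sigma^\star)\big],$$
where $\mu=\mu_\phi(\boldsymbol y),\ \sigma=\sigma_\phi(\boldsymbol y)$, $\mu^\star=\mu_\phi(\boldsymbol y^\star),\ \sigma^\star=\sigma_\phi(\boldsymbol y^\star)$, and $G$ denotes the full log-density. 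Applying the mean value theorem along straight segments in each bracket reduces everything to the three partials $\partial_z g=-(z-\mu)/\sigma^2$, $\partial_\mu g=(z-\mu)/\sigma^2$, and $\partial_\sigma g=-1/\sigma+(z-\mu)^2/\sigma^3$. Using $|z-\mu|\le b_1+b_2$ and $\sigma^{-1}\le b_3$, these are bounded by $(b_1+b_2)b_3^2$, $(b_1+b_2)b_3^2$, and $b_3+(b_1+b_2)^2b_3^3$ respectively; the last one, arising from the $(z-\mu)^2/\sigma^3$ term, is exactly the dominant $O(b^5)$ contribution.

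I would then convert the resulting sums $\sum_i|\cdot|$ into $\ell_2$ norms by Cauchy--Schwarz ($\|\cdot\|_1\le\sqrt n\,\|\cdot\|_2$) and invoke the $L$-Lipschitz property to write $\|\mu-\mu^\star\|_2,\ \|\sigma-\sigma^\star\|_2\le L\delta$, together with the hypothesis $\|\boldsymbol z-\boldsymbol z^\star\|_2\le\delta$. Collecting the $\mu$- and $\sigma$-derivative contributions of the parameter-change bracket yields the factor $L\delta\sqrt n\,C_1(B)$ with $C_1(B)=(b_1+b_2)^2b_3^3+(b_1+b_2)b_3^2+b_3$, while the point-change bracket yields a strictly lower-order term (bounded by $L\delta\sqrt n\,C_2(B)$ once $L\ge 1$ is taken without loss of generality); adding them gives $L\delta\sqrt n\,C(B)=L\delta\sqrt n\,O(b^5)$.

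Finally, the main obstacle I anticipate is the legitimacy of the mean-value step: it requires the intermediate points on each segment to still obey the assumed box constraints so that the uniform derivative bounds apply. I would address this by observing that the constraints $|z|\le b_1$, $|\mu|\le b_2$, $\sigma^{-1}\le b_3$ hold at both endpoints and cut out a convex region, so every convex combination $\sigma_t=(1-t)\sigma+t\sigma^\star\ge 1/b_3$ and $|z_t-\mu_t|\le b_1+b_2$ remains feasible; hence $\sigma_t^{-1}\le b_3$ along the whole path and the bound on $\partial_\sigma g$ (the delicate $1/\sigma^3$ term) stays valid. The only genuinely careful bookkeeping is tracking which power of $b_3$ enters each partial, since an error there would corrupt the leading $O(b^5)$ order.
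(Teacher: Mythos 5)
Your proposal is correct and follows essentially the same route as the paper's proof: a telescoping split into a parameter-change bracket and a point-change bracket, mean-value bounds on the per-coordinate Gaussian log-density partials under the box constraints (with the $(z-\mu)^2/\sigma^3$ term giving the dominant $(b_1+b_2)^2b_3^3$ contribution), and Cauchy--Schwarz plus the $L$-Lipschitz hypothesis to produce the $L\delta\sqrt{n}\,C(B)$ factor. In fact your bookkeeping is slightly more careful than the paper's: your point-change bound $(b_1+b_2)b_3^2$ corrects the paper's stated $C_2(B)=(b_1+b_2)b_3$ (which drops a power of $b_3$ from $\lvert\partial_z g\rvert=\lvert z-\mu\rvert/\sigma^2$), and your explicit $L\ge 1$ normalization justifies attaching the factor $L$ to that bracket, a step the paper performs silently; neither point affects the $O(b^5)$ conclusion.
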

\begin{proof}
The original can be split into two parts by the absolute value inequality
$$\lvert \log(p_{q_{\phi}}(\boldsymbol{z}))-\log(p_{q_{\phi}^{\star}}(\boldsymbol{z^{\star}}) \rvert \leq \lvert \log(p_{q_{\phi}}(\boldsymbol{z}))- \log(p_{q_{\phi}}(\boldsymbol{z^{\star}}))\rvert +
\lvert \log(p_{q_{\phi}}(\boldsymbol{z^{\star}}))-\log(p_{q_{\phi}^{\star}}(\boldsymbol{z^{\star}}) \rvert$$
\textbf{Firstly}, we need to prove:
$$\lvert \log(p_{q_{\phi}}(\boldsymbol{z^{\star}}))-\log(p_{q_{\phi}^{\star}}(\boldsymbol{z^{\star}}) \rvert \leq l*\delta*\sqrt{n}*C_1(B) $$
Since $q_{\phi}$ and $q_{\phi}^{\star}$ are both the normal distribution of the diagonal matrix with the covariance matrix,we have:\\
\begin{equation*}
\begin{aligned}
\left| \log(p_{q_{\phi}}(\boldsymbol{z^{\star}}))-\log(p_{q_{\phi}^{\star}}(\boldsymbol{z^{\star}}) \right|
&\leq
\left| \sum_{i=1}^{n}\log\left(\frac{\sigma_{\phi}(\boldsymbol{x})_i}{\sigma_{\phi}(\boldsymbol{x^{\star}})_i}\right)
\right|
+\left|
\sum_{i=1}^{n} \frac{(z_i^{\star}-\mu_{\phi}(\boldsymbol{x})_i)^2}{2\sigma_{\phi}(\boldsymbol{x})_i^2}- \frac{(z_i^{\star}-\mu_{\phi}(\boldsymbol{x^{\star}})_i)^2}{2\sigma_{\phi}(\boldsymbol{x^{\star}})_i^2}\right|
\\
 &\leq
 \left| \sum_{i=1}^{n}\log\left(\frac{\sigma_{\phi}(\boldsymbol{x})_i}{\sigma_{\phi}(\boldsymbol{x^{\star}})_i}\right)
\right|
 +\left|
 \sum_{i=1}^{n} (z_i^{\star}-\mu_{\phi}(\boldsymbol{x})_i)^2
 \left( \frac{1}{2\sigma_{\phi}(\boldsymbol{x})_i^2}-\frac{1}{2\sigma_{\phi}(\boldsymbol{x^{\star}})_i^2}
 \right)\right|\\
 &+\left| \sum_{i=1}^{n}
 \frac{1}{2\sigma_{\phi}(\boldsymbol{x^{\star}})_i^2}
 \left((z_i^{\star}-\mu_{\phi}(\boldsymbol{x})_i)^2-(z_i^{\star}-\mu_{\phi}(\boldsymbol{x^{\star}})_i)^2 \right)\right|
\end{aligned}
\end{equation*}
For the first part,we have:

\begin{equation*}
\begin{aligned}
\left| \sum_{i=1}^{n}\log\left(\frac{\sigma_{\phi}(\boldsymbol{x})_i}{\sigma_{\phi}(\boldsymbol{x^{\star}})_i}\right)
\right|
&\leq
 \left|
 \sum_{i=1}^{n}(\sigma_{\phi}(\boldsymbol{x})_i-\sigma_{\phi}(\boldsymbol{x^{\star}})_i)\cdot
 (\frac{1}{\sigma_{\phi}(\boldsymbol{x})_i})_{max}
 \right|
 \\
&\leq
\sqrt{nb_3^2
\sum_{i=1}^{n}(\sigma_{\phi}(\boldsymbol{x})_i-\sigma_{\phi}(\boldsymbol{x^{\star}})_i)^2
}\\
&\leq
\sqrt{n}b_3\|\sigma_{\phi}(\boldsymbol{x})-\sigma_{\phi}(\boldsymbol{x^{\star}})\|_2=\sqrt{n}l\delta b_3
\end{aligned}
\end{equation*}
where the second inequality follows from the Cauchy inequality and Lagrange mean value theorem.

For the second part,we have:

\begin{equation*}
\begin{aligned}
\left|
\sum_{i=1}^{n} (z_i^{\star}-\mu_{\phi}(\boldsymbol{x})_i)^2
 \left( \frac{1}{2\sigma_{\phi}(\boldsymbol{x})_i^2}-\frac{1}{2\sigma_{\phi}(\boldsymbol{x^{\star}})_i^2}
 \right)
 \right|
 &\leq
\sum_{i=1}^{n}\left|
 (z_i^{\star}-\mu_{\phi}(\boldsymbol{x})_i)^2
 \right|
 \left| \frac{1}{2\sigma_{\phi}(\boldsymbol{x})_i^2}-\frac{1}{2\sigma_{\phi}(\boldsymbol{x^{\star}})_i^2}
 \right|
 \\
 &\leq \sqrt{n}b_3^3(b_1+b_2)^2\sqrt{\sum_{i=1}^{n}(\sigma_{\phi}(\boldsymbol{x})_i-\sigma_{\phi}(\boldsymbol{x^{\star}})_i)^2}
 \\
 &\leq
   \sqrt{n}b_3^3(b_1+b_2)^2 \|\sigma_{\phi}(\boldsymbol{x})-\sigma_{\phi}(\boldsymbol{x^{\star}})\|_2
 \\
 &\leq
   \sqrt{n}l\delta b_3^3(b_1+b_2)^2
\end{aligned}
\end{equation*}
where the second inequality follows from the Cauchy inequality and Lagrange mean value theorem.

For the third part,we have:

\begin{equation*}
\begin{aligned}
\left|
\sum_{i=1}^{n}
 \frac{1}{2\sigma_{\phi}(\boldsymbol{x^{\star}})_i^2}
 \left((z_i^{\star}-\mu_{\phi}(\boldsymbol{x})_i)^2-(z_i^{\star}-\mu_{\phi}(\boldsymbol{x^{\star}})_i)^2 \right)
 \right|
 &\leq
 \left|
 \frac{b_3^2}{2}\sum_{i=1}^{n}
 2(\mu_{\phi}(\boldsymbol{x})_i-\mu_{\phi}(\boldsymbol{x^{\star}})_i)\cdot
(z_i^{\star}-\mu_{\phi}(\boldsymbol{x})_i)_{max}
 \right|
 \\
 &\leq
 b_3^2\sqrt{n(b_1+b_2)^2
 \sum_{i=1}^{n}(\mu_{\phi}(\boldsymbol{x})_i-\mu_{\phi}(\boldsymbol{x^{\star}})_i)^2
 }
 \\
 &\leq
 \sqrt{n}(b_1+b_2)b_3^2\|\mu_{\phi}(\boldsymbol{x})-\mu_{\phi}(\boldsymbol{x^{\star}})\|_2
 \\
 &\leq
 \sqrt{n}l\delta (b_1+b_2)b_3^2
\end{aligned}
\end{equation*}
where the second inequality follows from the Cauchy inequality.

Hence,we obtain:
$$\lvert \log(p_{q_{\phi}}(\boldsymbol{z^{\star}}))-\log(p_{q_{\phi}^{\star}}(\boldsymbol{z^{\star}}) \rvert
\leq
\sqrt{n}l\delta b_3+
\sqrt{n}l\delta b_3^3(b_1+b_2)^2+
 \sqrt{n}l\delta (b_1+b_2)b_3^2=
l*\delta*\sqrt{n}*C_1(B) $$
\textbf{Secondly},we need to prove:
$$
\lvert \log(p_{q_{\phi}}(\boldsymbol{z}))- \log(p_{q_{\phi}}(\boldsymbol{z^{\star}}))\rvert
\leq C_2(B)
$$
According to the definition of Lipshitz continuous,
$$
\|f\left(\boldsymbol{x}_{1}\right)-f\left(\boldsymbol{x}_{2}\right)\|_2
\leq K\left\|\boldsymbol{x}_{1}-\boldsymbol{x}_{2}\right\|_{2}
$$
is equivalent to
$$
(\forall \boldsymbol{x} \in \mathcal{X})\|\partial f / \partial \boldsymbol{x}\|_{2} \leq K
$$
Since
\begin{equation*}
\begin{aligned}
(\forall \boldsymbol{x} \in \mathcal{X})
\|\partial \log(p_{q_{\phi}}(\boldsymbol{z})) / \partial \boldsymbol{z}\|_{2}
&=
(\forall \boldsymbol{x} \in \mathcal{X})
\| \operatorname{diag}\left(\sigma_{\phi}^{2}(\boldsymbol{x})\right)^{-1}(\boldsymbol{z}-\mu_{\phi}(\boldsymbol{x}))\|_2
\\
&\leq
\sqrt{\sum_{i=1}^{n}b_3^2(b_1+b_2)^2}=\sqrt{n}(b_1+b_2)b_3
\end{aligned}
\end{equation*}
As such
$$
\lvert \log(p_{q_{\phi}}(\boldsymbol{z}))- \log(p_{q_{\phi}}(\boldsymbol{z^{\star}}))\rvert
\leq
\sqrt{n}(b_1+b_2)b_3\|\boldsymbol{z}-\boldsymbol{z^{\star}}\|_2=l\delta\sqrt{n}(b_1+b_2)b_3
$$
\textbf{Finally},we obtain:
\begin{equation*}
\begin{aligned}
\lvert
\log(p_{q_{\phi}}(\boldsymbol{z}))-\log(p_{q_{\phi}^{\star}}(\boldsymbol{z^{\star}}) \rvert
&\leq
\lvert \log(p_{q_{\phi}}(\boldsymbol{z}))- \log(p_{q_{\phi}}(\boldsymbol{z^{\star}}))\rvert +
\lvert \log(p_{q_{\phi}}(\boldsymbol{z^{\star}}))-\log(p_{q_{\phi}^{\star}}(\boldsymbol{z^{\star}}) \rvert
\\
&=
l\delta\sqrt{n}(C_1(B)+C_2(B))=l\delta\sqrt{n}C(B)
\end{aligned}
\end{equation*}
\end{proof}
\begin{lemma}
Assume that $f_m(\cdot): \boldsymbol{x} \mapsto \boldsymbol{z}$ is $L$-lipschitz, $\boldsymbol{x}\in \mathcal{X}\subset\mathbb{R}^n$, and $\boldsymbol{z} \sim \mathcal{N}\left(0,I_n\right)$. If $\|\boldsymbol{x}^{\star}-\boldsymbol{x}\|_2\leq \delta $, we have:
 \begin{equation*}
\begin{aligned}
\log(p(\boldsymbol{z^{\star}}))-\log(p(\boldsymbol{z}))
\geq
-L\delta \|\boldsymbol{z}\|_{2}-\frac{\delta^2}{2}
\end{aligned}
\end{equation*}
where $\boldsymbol{z}=f_m(\boldsymbol{x})$ and  $\boldsymbol{z^{\star}}=f_m(\boldsymbol{x^{\star}})$.
\label{lemma2}
\end{lemma}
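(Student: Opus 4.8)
The plan is to reduce everything to the explicit form of the standard normal density and then control the resulting quadratic expression with the Lipschitz hypothesis. Since $\boldsymbol{z}\sim\mathcal{N}(0,I_n)$, its log-density is $\log p(\boldsymbol{z}) = -\tfrac{n}{2}\log(2\pi) - \tfrac12\|\boldsymbol{z}\|_2^2$, so the additive constant cancels in the difference and I obtain the exact identity $\log p(\boldsymbol{z}^\star) - \log p(\boldsymbol{z}) = \tfrac12\big(\|\boldsymbol{z}\|_2^2 - \|\boldsymbol{z}^\star\|_2^2\big)$. This turns the claim into a purely algebraic statement about the two latent vectors, with no probability left to handle.

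First I would translate the input-space hypothesis into the latent space: because $f_m$ is $L$-Lipschitz and $\|\boldsymbol{x}^\star-\boldsymbol{x}\|_2\le\delta$, we immediately get $\|\boldsymbol{z}^\star-\boldsymbol{z}\|_2 = \|f_m(\boldsymbol{x}^\star)-f_m(\boldsymbol{x})\|_2 \le L\delta$. Writing $\boldsymbol{\Delta}:=\boldsymbol{z}^\star-\boldsymbol{z}$ and expanding $\|\boldsymbol{z}^\star\|_2^2 = \|\boldsymbol{z}+\boldsymbol{\Delta}\|_2^2 = \|\boldsymbol{z}\|_2^2 + 2\langle\boldsymbol{z},\boldsymbol{\Delta}\rangle + \|\boldsymbol{\Delta}\|_2^2$, the identity above becomes $\log p(\boldsymbol{z}^\star) - \log p(\boldsymbol{z}) = -\langle\boldsymbol{z},\boldsymbol{\Delta}\rangle - \tfrac12\|\boldsymbol{\Delta}\|_2^2$.

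It then remains only to lower-bound the two terms. For the cross term, Cauchy--Schwarz gives $-\langle\boldsymbol{z},\boldsymbol{\Delta}\rangle \ge -|\langle\boldsymbol{z},\boldsymbol{\Delta}\rangle| \ge -\|\boldsymbol{z}\|_2\|\boldsymbol{\Delta}\|_2 \ge -L\delta\,\|\boldsymbol{z}\|_2$, using the latent bound from the previous step. For the quadratic term, $-\tfrac12\|\boldsymbol{\Delta}\|_2^2 \ge -\tfrac12 L^2\delta^2$. Summing the two lower bounds yields $\log p(\boldsymbol{z}^\star) - \log p(\boldsymbol{z}) \ge -L\delta\|\boldsymbol{z}\|_2 - \tfrac12 L^2\delta^2$, which is essentially the claimed inequality.

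The calculation itself is routine; the only delicate point is the constant on the quadratic term. My derivation produces $-\tfrac12 L^2\delta^2$, whereas the statement writes $-\tfrac{\delta^2}{2}$. These agree precisely under the contraction/normalization regime invoked in the surrounding discussion, where $\|f_m(\boldsymbol{x})\|_2 = O(\|\boldsymbol{x}\|_2)$ and the relevant norms are kept at most $1$, so that $L\le 1$ and hence $-\tfrac12 L^2\delta^2 \ge -\tfrac{\delta^2}{2}$. Absorbing $L^2$ into $1$ is the single step where one must be careful, and it is exactly the place where the ``well-behaved main-flow'' assumption is being used. I would therefore establish the clean bound $-L\delta\|\boldsymbol{z}\|_2 - \tfrac12 L^2\delta^2$ and note that it specializes to the displayed form once $L\le 1$.
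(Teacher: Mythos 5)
Your proof is correct and, in mathematical substance, it is the same argument as the paper's: the paper invokes Theorem 3.1 of the cited adversarial-flow work to obtain the lower bound $\tfrac{1}{2}\|\boldsymbol{z}\|_2^2-\tfrac{1}{2}\|\boldsymbol{z}\|_2^2\bigl(1+\tfrac{L\delta}{\|\boldsymbol{z}\|_2}\bigr)^2$, which is exactly what the triangle inequality $\|\boldsymbol{z}^{\star}\|_2\le\|\boldsymbol{z}\|_2+L\delta$ yields when substituted into the Gaussian log-density, and that step is interchangeable with your Cauchy--Schwarz bound on $\langle\boldsymbol{z},\boldsymbol{\Delta}\rangle$; your version simply has the merit of being self-contained rather than resting on an external citation. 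The one point worth stressing is the quadratic constant: expanding the paper's own intermediate expression also produces $-L\delta\|\boldsymbol{z}\|_2-\tfrac{1}{2}L^2\delta^2$, yet the paper's proof then writes $-\tfrac{\delta^2}{2}$ with no comment, so the $L^2$-versus-$1$ discrepancy you flagged is present, unacknowledged, in the paper itself. Your closing observation --- that the displayed form of the lemma holds only under $L\le 1$, or else the last term should read $-\tfrac{1}{2}L^2\delta^2$ --- is therefore not a weakness of your argument but a correction that the paper's own proof needs.
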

\begin{proof}
Using Theorem 3.1 in \cite{adflow}, we obtain:
\begin{equation*}
\begin{aligned}
\log(p(\boldsymbol{z^{\star}}))-\log(p(\boldsymbol{z}))&=
\log(p_{\mathcal{N}\left(0,I_n\right)}(f_m(\boldsymbol{x^{\star}})))-\log(p_{\mathcal{N}\left(0,I_n\right)}(f_m(\boldsymbol{x})))
\\&\geq
\frac{\|f_m(\boldsymbol(x))\|_2^2}{2}
-\frac{(\|f_m(\boldsymbol(x))\|_2^2)(1+\frac{l\delta}{\|f_m(\boldsymbol(x))\|_2})^2} {2}
\\
&=-(l\delta\|f_m(\boldsymbol{x^{\star}})\|_2+\frac{\delta^2}{2})=-L\delta \|\boldsymbol{z}\|_{2}-\frac{\delta^2}{2}
\end{aligned}
\end{equation*}
Since transformation $f_m$ is $L$- Lipschitz,and $\|x\|_2$ is bounded, so $f_m(x)$ is always bounded. Without loss of generality, $\|f_m(\boldsymbol{x})\|_2 \leq L*\|\boldsymbol{x}-\boldsymbol{0}\|_2 + \|f_m(\boldsymbol{0})\|_2=O(b)$
\end{proof}
\begin{theorem}
Given a flow model defined as $F(\cdot)$, assume that $\boldsymbol{x} \in \mathcal{X}\subset\mathbb{R}^n$, $\|\boldsymbol{x}\|_2\leq b \in \mathbb{R}^{+}$. If the main-flow $f_m(\cdot)$ is $L_1$-Lipschitz, factor-out layers are all $L_2$-Lipschitz and $\|\boldsymbol{x}_{pert}-\boldsymbol{x}_{clean}\|_2\leq \delta$ , then we have
\begin{eqnarray}
 L(F(\boldsymbol{x}_{pert}))-L(F(\boldsymbol{x}_{clean}))\nonumber\geq-\sum_i^kL_2\delta\sqrt{dim_k}*O(b^5)-L_1\delta*O(b)-\frac{\delta^2}{2}
\end{eqnarray}
where k stands for the number of factor-out layers, $dim_k$ is the input's dimension of $k$-th factor-out layer, $L(\cdot)$ denotes the log-likelihood.
\end{theorem}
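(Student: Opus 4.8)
The plan is to exploit the additive decomposition of the total log-likelihood that the multi-scale architecture induces, to bound each summand separately (Lemma 2 for the single main-flow contribution and Lemma 1 for each of the $k$ factor-out contributions), and then to add the bounds term by term. Concretely, I would first write
$$\log p(F(\boldsymbol{x})) = \log p(\boldsymbol{z}_{k+1}) + \sum_{i=1}^{k}\log p(\boldsymbol{z}_i \mid \boldsymbol{y}_i),$$
with $\boldsymbol{z}_{k+1} = f_m(\boldsymbol{x})$, so that the quantity $L(F(\boldsymbol{x}_{pert})) - L(F(\boldsymbol{x}_{clean}))$ splits into one main-flow difference and $k$ factor-out differences, each of which I can control independently.

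For the main-flow difference, since $f_m$ is $L_1$-Lipschitz and $\boldsymbol{z}_{k+1}\sim\mathcal{N}(0,I_n)$, Lemma 2 applies verbatim and yields
$$\log p(\boldsymbol{z}_{k+1}^{\star}) - \log p(\boldsymbol{z}_{k+1}) \geq -L_1\delta\|\boldsymbol{z}_{k+1}\|_2 - \frac{\delta^2}{2}.$$
The boundedness step inside the proof of Lemma 2, namely $\|f_m(\boldsymbol{x})\|_2 \leq L_1\|\boldsymbol{x}\|_2 + \|f_m(\boldsymbol{0})\|_2 = O(b)$ whenever $\|\boldsymbol{x}\|_2 \leq b$, then upgrades this to a lower bound of $-L_1\delta\,O(b) - \tfrac{\delta^2}{2}$, which is exactly the main-flow piece of the claimed inequality.

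For each factor-out difference, the hypotheses of Lemma 1 are met at the $i$-th layer (the parameter networks $\mu_\phi,\sigma_\phi$ are $L_2$-Lipschitz and the relevant norms are bounded by $b$), so applying that lemma with ambient dimension $dim_i$ gives
$$\bigl|\log p(\boldsymbol{z}_i \mid \boldsymbol{y}_i) - \log p(\boldsymbol{z}_i^{\star} \mid \boldsymbol{y}_i^{\star})\bigr| \leq L_2\delta\sqrt{dim_i}\,O(b^5),$$
whence the signed difference is at least $-L_2\delta\sqrt{dim_i}\,O(b^5)$. Summing the main-flow lower bound together with the $k$ factor-out lower bounds reproduces the stated inequality, the $O(b^5)$ factors tracking the factor-out constant $C(B)$ from Lemma 1 and the $O(b)$ factor tracking the main-flow norm bound.

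The hard part will be the propagation step rather than the additive bookkeeping: to invoke Lemma 1 at layer $i$ I must certify its input conditions $\|\boldsymbol{y}_i - \boldsymbol{y}_i^{\star}\|_2 \leq \delta$ and $\|\boldsymbol{z}_i - \boldsymbol{z}_i^{\star}\|_2 \leq \delta$ starting only from the single input bound $\|\boldsymbol{x}_{pert} - \boldsymbol{x}_{clean}\|_2 \leq \delta$. This forces the assumption that each sub-flow from the input up to the $i$-th factor-out split is non-expansive, so that the perturbation is not amplified before it reaches the conditioning variable $\boldsymbol{y}_i$ and the factored-out variable $\boldsymbol{z}_i$; this is precisely where the additive-versus-affine distinction enters, since additive coupling preserves such a bound while affine coupling need not. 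I would therefore carry this non-expansiveness as a standing hypothesis (absorbed into $L_1, L_2$) and note, as the paper already does, that because the map is between discrete distributions there is no Jacobian-determinant term to control, so the log-likelihood decomposition used above is exact.
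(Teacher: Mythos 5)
Your proposal follows essentially the same route as the paper's own proof: the identical multi-scale decomposition of the log-likelihood, Lemma 2 (with the $\|f_m(\boldsymbol{x})\|_2 \leq L_1\|\boldsymbol{x}\|_2 + \|f_m(\boldsymbol{0})\|_2 = O(b)$ bound) for the main-flow term, Lemma 1 (with the $O(b^5)$ constant) for each factor-out term, and termwise summation. You are in fact more careful than the paper on one point: the paper verifies the $O(b)$ norm bounds at the internal layers but silently assumes the perturbation reaching $\boldsymbol{y}_i, \boldsymbol{z}_i$ is still of size $\delta$, whereas you explicitly flag this propagation step and patch it with a non-expansiveness hypothesis absorbed into the Lipschitz constants.
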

\begin{proof}
Since:
\begin{eqnarray*}
\begin{aligned}
L(F(\boldsymbol{x}_{pert}))-L(F(\boldsymbol{x}_{clean}))=&(\sum_{i=1}^{k}(\log p(\boldsymbol{z}_i^{pert}|\boldsymbol{y}^{pert}_i)-\log p(\boldsymbol{z}_i^{clean}|\boldsymbol{y}_i^{clean})))\\
&+(\log p(\boldsymbol{z}_{k+1}^{pert})-\log p(\boldsymbol{z}_{k+1}^{pert}))\\
&=\sum_{i=1}^k (Loss_{f_{foi}}(\boldsymbol{x}_{pert})- Loss_{f_{foi}}(\boldsymbol{x}_{clean}))\\
&+ Loss_{mf}(\boldsymbol{x}_{pert})-Loss_{mf}(\boldsymbol{x}_{clean})
\end{aligned}
\end{eqnarray*}
Assume $\boldsymbol{y_t}$ and $\boldsymbol{z_t}$ is the input of t-th factor-out layer$f_{fo_t}$. Since the flow modules are Lipschitz continuous and $\|x\|_2 \leq b$, $\|f_m(\boldsymbol{x})\|_2 \leq L*\|\boldsymbol{x}-\boldsymbol{0}\|_2 + \|f_m(\boldsymbol{0})\|_2=O(b)$. As such, $\|y_t\|_{\infty}\leq \|y_t\|_2\leq O(b)$ and $\|z_t\|_{\infty}\leq \|z_t\|_2\leq O(b)$. Similarly, due to the Lipschitz continuous for $f_{fo_t}$,
$\|\mu_{\phi}(\boldsymbol{x})\|_{\infty}$,$\|\sigma_{\phi}(\boldsymbol{x})^{-1}\|_{\infty} $ can be also bounded by $O(b)$ .\\
For every $ Loss_{f_{foi}}$ outputted by i-th factor-out layer $f_{fo_i} $, according to \textbf{Lemma\ref{lemma1}}, we obtain:\\
 $$
Loss_{f_{foi}}(\boldsymbol{x}_{pert})- Loss_{f_{foi}}(\boldsymbol{x}_{clean})
 = \log(p_{q_{\phi}^{\star}}(\boldsymbol{z}_{i}^{pert}) -\log(p_{q_{\phi}}(\boldsymbol{z}_{i}^{clean}))
 \geq
 -L_2*\delta*\sqrt{dim_i}*O(b^5)
 $$
Further,  assume $f_{mf}: \boldsymbol{y_k} \mapsto \boldsymbol{z_{k+1}} \backsim \mathcal{N}\left(0,I\right)$, similarly,
$z_k$ can be bound by $O(b)$, using \textbf{Lemma\ref{lemma2}}:\\
$$
Loss_{mf}(\boldsymbol{x}_{pert})-Loss_{mf}(\boldsymbol{x}_{clean})
=
\log p(\boldsymbol{z}_{k+1}^{pert})-\log p(\boldsymbol{z}_{k+1}^{clean})
\geq
-L_1*\delta*O(b)-\frac{\delta^2}{2}
$$
Hence:
\begin{eqnarray*}
 L(F(\boldsymbol{x}_{pert}))-L(F(\boldsymbol{x}_{clean}))\nonumber
\geq-\sum_i^kL_2\delta\sqrt{dim_k}*O(b^5)-L_1\delta*O(b)-\frac{\delta^2}{2}
\end{eqnarray*}
\end{proof}

Note that in main-flow of continuous flows we also need to consider the effect of the Jacobian determinant. However, it is necessary to introduce bi-Lipschitz to ensure the local Lipschitz property of the Jacobian determinant.
\begin{mydef}{\rm(Lipschitz Continuity and bi-Lipschitz Continuity )}\\
A fuction $F(\cdot):\mathbb{R}^{n} \rightarrow \mathbb{R}^{n}$ is called $L$-Lipschitz continuous if there exists a constant $L$ such that for any $x_{1}, x_{2} \in \mathbb{R}^{n}$, we have
\begin{equation*}\label{lip}
   \left\|F\left(x_{1}\right)-F\left(x_{2}\right)\right\| \leq L\left\|x_{1}-x_{2}\right\|
\end{equation*}
If an inverse $F^{-1}: \mathbb{R}^{n} \rightarrow \mathbb{R}^{n}$ and a constant $L^{\star}=Lip(F^{-1})$ exists such that for all $y_{1}, y_{2} \in \mathbb{R}^{n}$
$$
\left\|F^{-1}\left(y_{1}\right)-F^{-1}\left(y_{2}\right)\right\| \leq L^{*}\left\|y_{1}-y_{2}\right\|
$$
holds, then $F$ is called bi-Lipschitz continuous.
\end{mydef}

With the bi-Lipshitz Continuity, we obtain:
\begin{lemma}
Assume that $f_m(\cdot): \boldsymbol{x} \mapsto \boldsymbol{z}$ is bi-Lipschitz with bound $L$ and $L^{-1}$, $\boldsymbol{x}\in \mathcal{X}\subset\mathbb{R}^n$. If $\|\boldsymbol{x}^{\star}-\boldsymbol{x}\|_2\leq \delta $, we have:
$$\left|\log \left|\operatorname{det}\left(\frac{\partial\boldsymbol{z}^{\star}}{\partial \boldsymbol{x}^{\star\top}}\right)\right|
-\log \left|\operatorname{det}\left(\frac{\partial\boldsymbol{z}}{\partial \boldsymbol{x}^{\top}}\right)\right| \right|
\leq 2n\log(L)$$
\end{lemma}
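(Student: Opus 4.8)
The plan is to reduce the log-absolute-determinant to a sum over the singular values of the Jacobian and then bound each singular value through the bi-Lipschitz hypothesis. Writing $J(\boldsymbol{x})=\partial\boldsymbol{z}/\partial\boldsymbol{x}^{\top}$ for the Jacobian of $f_m$ and letting $\sigma_1(\boldsymbol{x})\geq\cdots\geq\sigma_n(\boldsymbol{x})>0$ denote its singular values, I would start from the identity
\begin{equation*}
\log\left|\operatorname{det}J(\boldsymbol{x})\right|=\sum_{i=1}^{n}\log\sigma_i(\boldsymbol{x}),
\end{equation*}
which holds because $\left|\operatorname{det}J\right|=\prod_{i}\sigma_i$. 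The positivity of the singular values, hence the fact that every logarithm is well defined, is guaranteed by the invertibility inherent in the bi-Lipschitz assumption.

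Next I would convert the two Lipschitz bounds into two-sided control of the singular values. Because $f_m$ is $L$-Lipschitz, its Jacobian obeys $\|J(\boldsymbol{x})\|_2=\sigma_1(\boldsymbol{x})\leq L$ at every point of differentiability; because $f_m^{-1}$ carries the reciprocal Lipschitz bound, differentiating $f_m^{-1}\circ f_m=\mathrm{id}$ identifies $J(\boldsymbol{x})^{-1}$ as the Jacobian of the inverse and forces $\|J(\boldsymbol{x})^{-1}\|_2\leq L$, i.e. $\sigma_n(\boldsymbol{x})\geq L^{-1}$. Hence $L^{-1}\leq\sigma_i(\boldsymbol{x})\leq L$ for all $i$, and since $L\geq 1$ this gives $\left|\log\sigma_i(\boldsymbol{x})\right|\leq\log L$. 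Summing over $i$, I obtain the uniform pointwise estimate
\begin{equation*}
\left|\log\left|\operatorname{det}J(\boldsymbol{x})\right|\right|\leq\sum_{i=1}^{n}\left|\log\sigma_i(\boldsymbol{x})\right|\leq n\log L,
\end{equation*}
which applies equally at $\boldsymbol{x}$ and at $\boldsymbol{x^{\star}}$.

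The conclusion then follows from the triangle inequality, without ever comparing the two Jacobians to one another:
\begin{equation*}
\left|\log\left|\operatorname{det}J(\boldsymbol{x^{\star}})\right|-\log\left|\operatorname{det}J(\boldsymbol{x})\right|\right|\leq\left|\log\left|\operatorname{det}J(\boldsymbol{x^{\star}})\right|\right|+\left|\log\left|\operatorname{det}J(\boldsymbol{x})\right|\right|\leq 2n\log L.
\end{equation*}
The hard part is really the translation of the operator-norm Lipschitz bounds on $f_m$ and $f_m^{-1}$ into the two-sided singular-value bounds, together with a differentiability technicality: a general Lipschitz map is differentiable only almost everywhere (Rademacher), so I would simply assume $f_m$ is differentiable at $\boldsymbol{x}$ and $\boldsymbol{x^{\star}}$, as is implicit throughout the paper. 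I would also remark that the hypothesis $\|\boldsymbol{x^{\star}}-\boldsymbol{x}\|_2\leq\delta$ is in fact never used here: the estimate is a uniform pointwise bound on $\left|\log\left|\operatorname{det}J\right|\right|$, so the same $2n\log L$ holds for any pair of points regardless of their separation.
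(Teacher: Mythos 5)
Your proof is correct and takes essentially the same route as the paper's: both reduce $\log\left|\operatorname{det}J\right|$ to the sum $\sum_{i}\log\sigma_{i}$ of log-singular values, use the bi-Lipschitz hypothesis to confine every $\sigma_{i}$ to $[L^{-1},L]$ so that $\left|\log\sigma_{i}\right|\leq\log L$, and conclude with the triangle inequality to get the uniform bound $2n\log(L)$. Your write-up is in fact tidier than the paper's (which likewise never uses $\delta$), since you explicitly derive $\left\|J(\boldsymbol{x})^{-1}\right\|_{2}\leq L$ by differentiating $f_m^{-1}\circ f_m=\mathrm{id}$ and flag the Rademacher differentiability caveat.
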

\begin{proof}
To prove the Lipschitz bounds, we use the identity:
$$
\operatorname{Lip}(F)=\sup _{x \in \mathbb{R}^{n}}\left\|\operatorname{det}\left(\frac{\partial\boldsymbol{z}}{\partial \boldsymbol{x}^{\top}}\right)\right\|_{2}
$$
where$\|\cdot\|_2$ represents the spectral norm of the Jacobian matrix. Thus, we obtain that:
\begin{equation*}
\begin{aligned}
\left|\log \left|\operatorname{det}\left(\frac{\partial\boldsymbol{z}^{\star}}{\partial \boldsymbol{x}^{\star\top}}\right)\right|
-\log \left|\operatorname{det}\left(\frac{\partial\boldsymbol{z}}{\partial \boldsymbol{x}^{\top}}\right)\right|\right|
&\leq \left|\sum_i^{n}\log |\sigma_{i}(x)|+\sum_i^{n}\log |\sigma_{i}(x^{\star})|\right|\\
&\leq 2n\sup _{x \in \mathbb{R}^{n}}|\log\sigma_{i}(x)|
=2n\log(L)
\end{aligned}
\end{equation*}
where $\sigma_{i}(x)$ denotes the i-th singular value of $\operatorname{det}\left(\frac{\partial\boldsymbol{z}}{\partial \boldsymbol{x}^{\top}}\right)$
\end{proof}

Note that if $F^{-1}$ cannot guarantee Lipschitz property, then the absolute value of the minimal eigenvalue of the determinant can be infinitely close to zero. This makes $|\log\sigma_{i}(x)|$ unbounded. By Theorem 2 in \cite{understanding}, the affine coupling layer does not guarantees the bi-Lipschitz property, but the additive layer does. This indicates that the robustness of affine layer is more fragile in continuous flow, which is consistent with the point we argue in the main text.

\subsection{Section 5: Experimental Visualization}
In this section, we visualize the attack evaluation of AW-PGD on IDF versus Glow. And we visualize our proposed defense methods: R-IDF and R-IDF (Hybrid). 

Due to the limitation of the compression task, no significant gap with PGD is exhibited in Figure\ref{attackdis}, but this situation is mitigated on the Glow model attack, as shown in Figure\ref{attackglow}. Note that AW-PGD does not try to make a small number of samples large to improve the effectiveness of the attack but instead shifts  the data distribution as a whole. Furthermore, from Figure \ref{attackdefence}, we can clearly see that our defense methods have significant robustness improvement. In particular, when the attack strength $\epsilon=1$, R-IDF is not significantly inferior to the adversarial training based method.
\end{spacing}
\begin{figure*}[p]
\begin{center}
\includegraphics[width=1\textwidth]{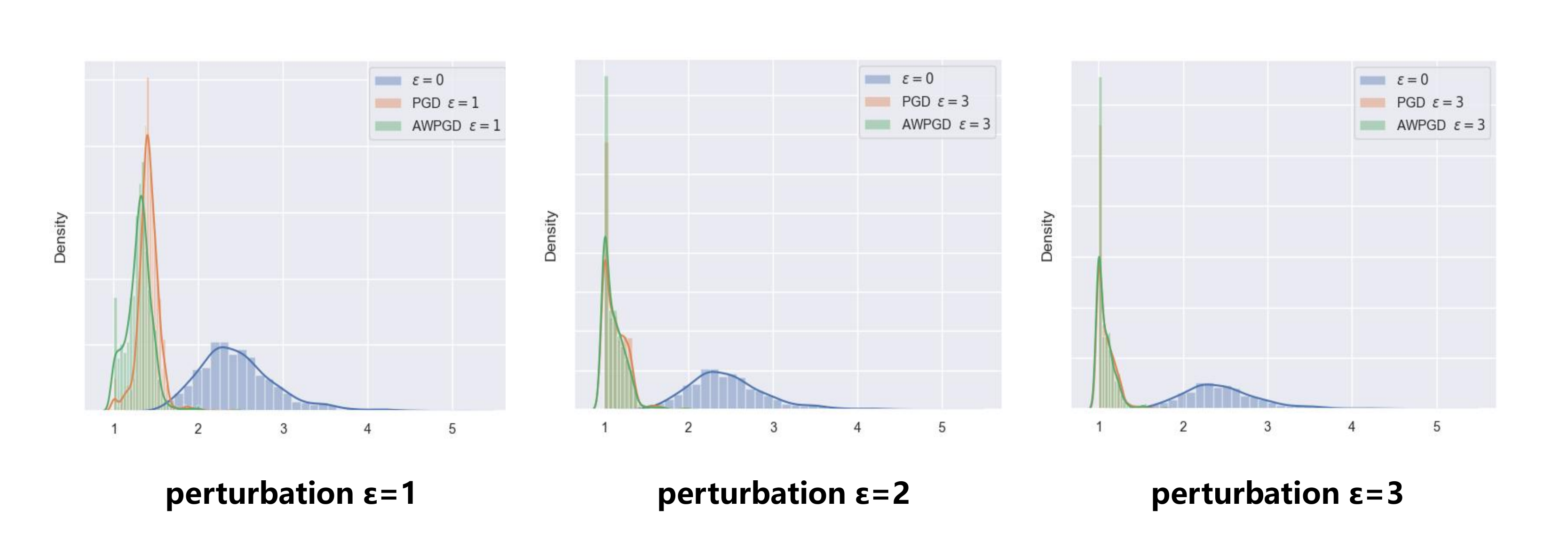}
\end{center}
  \caption{ PGD \& AW-PGD attack distributions for IDF. The x-label is CR, \emph{where lower CR means  worse result.}}
\label{attackdis}
\end{figure*}

\begin{figure*}[p]
\begin{center}
\includegraphics[width=1\textwidth]{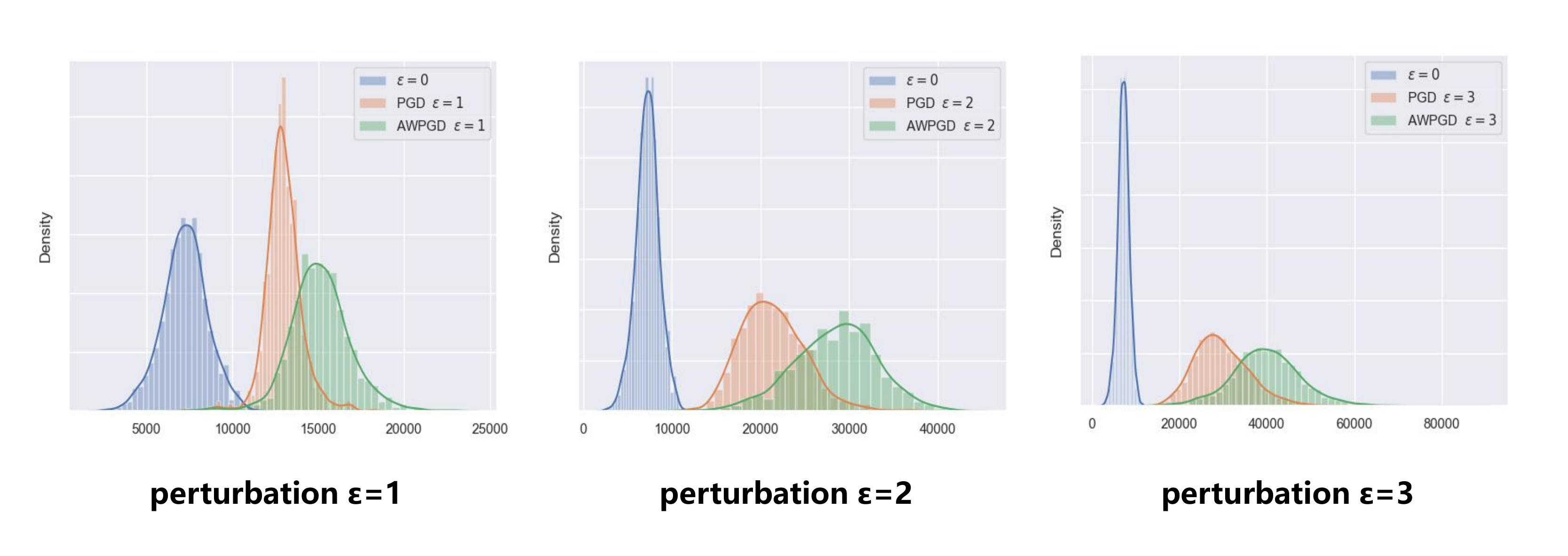}
\end{center}
  \caption{PGD \& AW-PGD attack distributions for Glow. The x-label is NLL, \emph{where higher NLL means  worse result.} }
\label{attackglow}
\end{figure*}

\begin{figure*}[p]
\begin{center}
\includegraphics[width=1\textwidth]{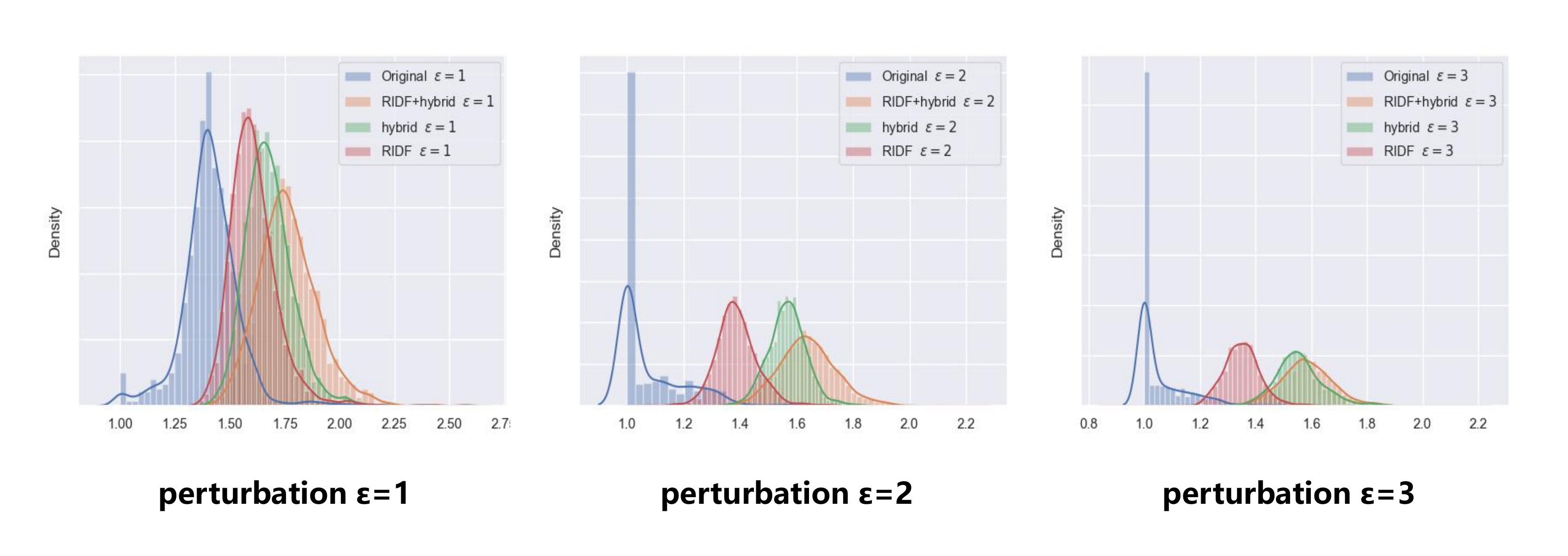}
\end{center}
  \caption{PGD attack distributions for IDF \& R-IDF \& R-IDF(hybrid). The x-label is CR, \emph{where lower CR means  worse result.}}
\label{attackdefence}
\end{figure*}
\end{document}